\numberwithin{equation}{section}
\definecolor{purple}{rgb}{0.9,0,0.8}
\definecolor{gray}{rgb}{0.7,0.7,0.7}
\newtheorem{thm}{Theorem}[section]
\newtheorem{lem}[thm]{Lemma}
\newtheorem{ppn}[thm]{Proposition}
\theoremstyle{definition}
\newtheorem{defn}[thm]{Definition}
\newtheorem{rem}[thm]{Remark}
\newcommand{\beq}{\begin{equation}}
\newcommand{\eeq}{\end{equation}}
\newcommand{\al}{\alpha}
\newcommand{\ep}{\epsilon}
\newcommand{\B}{\mathbb{B}}
\newcommand{\D}{\mathcal{D}}
\newcommand{\E}{\mathbb{E}}
\newcommand{\G}{\mathbb{G}}
\newcommand{\M}{\mathbb{M}}
	\renewcommand{\P}{\mathbb{P}}
\newcommand{\R}{\mathbb{R}}
\newcommand{\cP}{\mathcal{P}}
\newcommand{\cQ}{\mathcal{Q}}
\newcommand{\cS}{\mathcal{S}}
\DeclareMathOperator{\argmin}{arg\,min}
\renewcommand{\varnothing}{\varnothing}
\DeclareMathOperator{\Var}{Var}
\DeclareMathOperator{\Cov}{Cov}
\renewcommand{\M}{{\cal M}}
\renewcommand{\M}{\mathcal{M}}
\definecolor{pink}{rgb}{0.858, 0.188, 0.478}
\providecommand{\keywords}[1]
{
  \small	
  \textbf{\textit{Keywords---}} #1
}
\title[The parameter of the Fermat distance]{Choosing the parameter of the Fermat distance: navigating geometry and noise}
\date{}
\author{F. Chazal}
\address{
Institut de Mathématique d’Orsay,
Faculté des Sciences d’Orsay, Université Paris-Saclay,France}
\email{frederic.chazal@inria.fr}
\author{L. Ferraris}
\address{
Institut de Mathématique d’Orsay,
Faculté des Sciences d’Orsay, Université Paris-Saclay,France}
\email{laure.ferraris@inria.fr}
\author[P. Groisman]{P. Groisman}
\address{IMAS-CONICET, Departamento de Matem\'atica, Fac. Cs. Exactas y Naturales, Universidad de Buenos Aires, Argentina}
\email{pgroisma@dm.uba.ar}
\author[M. Jonckheere]{M. Jonckheere}
\address{LAAS, CNRS, Toulouse, France}
\email{matthieu.jonckheere@laas.fr}
\author{F. Pascal}
\address{Université Paris-Saclay, CNRS, CentraleSupélec, Laboratoire des Signaux et Systèmes, 91190, Gif-sur-Yvette, France}
\email{frederic.pascal@centralesupelec.fr}
\author[F. Sapienza]{F. Sapienza}
\address{Department of Statistics, University of California, Berkeley, CA, USA }
\email{fsapienza@berkeley.edu}
\begin{document}

\begin{abstract}
The Fermat distance has been recently established as a valuable tool for machine learning tasks when a natural distance is not directly available to the practitioner or to improve the results given by Euclidean distances by exploding the geometrical and statistical properties of the dataset. This distance depends on a parameter $\alpha$ that significantly impacts the performance of subsequent tasks. Ideally, the value of $\alpha$ should be large enough to navigate the geometric intricacies inherent to the problem. At the same time, it should remain restrained enough to sidestep any deleterious ramifications stemming from noise during the distance estimation process.
We study both theoretically and through simulations how to select this parameter.
\end{abstract}

\keywords{Fermat distance, clustering, geometry, density, metric learning}

\maketitle

\section{Introduction}


Let $Q_n = \left\{q_1,\dots,q_n \right\} \subset \R^D$ be independent random points with common density $f : \M \subset \R^D \mapsto \R_{\geq 0}$ supported on a smooth Riemannian manifold $\M$ embedded in $\R^D$.
We assume that $f$ is a density with respect to the volume form on $\M$.
Typically, $d:=\dim(\mathcal M) < D$, but it can take any value $1 \le d \le D$.

We first recall the definition of the \textit{sample Fermat distance} 
and the \textit{macroscopic Fermat distance} \cite{hwang2016, original, MCD}.

\begin{defn}[Sample Fermat distance]
For $\alpha \ge 1$ and $x,y \in \R^D$ we define,
\begin{equation}
 D_{Q_n, \alpha}(x,y)=\inf \left \{ \sum_{j=1}^{k-1}|q_{j+1}-q_j|^\alpha: (q_1, \dots, q_k) \text{ is a path from $x$ to $y$}, \,\, k\ge 1 \right \},
\label{D_Q} 
\end{equation}
where $|\cdot|$ denotes the $D$-dimensional Euclidean norm, $q_1$ and $q_k$ are the nearest neighbors in $Q_n$ of $x$ and $y$, respectively.
\end{defn}

\begin{defn}[Macroscopic Fermat distance]
For $\alpha \ge 1$ and $x,y \in \mathcal M$,
\begin{equation}
 D^d_{\alpha}(x,y)=\inf_{\gamma \in \Gamma_{x,y}} \int_\gamma \frac{1}{f^{(\al- 1)/d}},
 \label{D_f}
\end{equation}
where $\Gamma_{x,y}$ is the set of rectifiable paths $\gamma : [0,1] \rightarrow \mathcal M$ such that $\gamma(0)=x$, $\gamma(1)=y$.
\end{defn}

These distances have been used in several situations and proved to be useful in a variety of learning tasks \cite{MCD, LMM, borghini2020intrinsic, original, trillos2023fermat, carpio2019fingerprints, manousidaki2021clustering}. 
Under different assumptions on $\M$ and $f$, it has been shown that when $\alpha > 1$, as $n\to \infty$,
\begin{equation}n^{(\alpha-1)/d}D_{Q_n,\alpha}(x,y) \longrightarrow \mu(\alpha,d) \, D_{\alpha}^d(x,y), \qquad \text{a.s.,}
\label{eq:fermat-convergence}
\end{equation}
where $\mu(\alpha,d)$ is a positive constant depending only on the intrinsic dimension $d$ of $\mathcal M$ and the parameter $\alpha$ \cite{original, borghini2020intrinsic, hwang2016}.

It is worth noting that in the works \cite{LMM, MCD}, a different definition of Fermat distance is introduced, specifically expressed as
\begin{equation}\label{Defalt}
\bar D_{Q_n,\alpha}(x,y) = D_{Q_n,\alpha}(x,y)^{1/\alpha}.
\end{equation}
This alternative definition carries implications for practical implementations, each accompanied by its advantages and limitations, which must be considered when using it along with specific subsequent machine learning tasks.

Our study sheds light on some key points about the use of the Fermat distance, as given in Definition \ref{D_f}, for both clustering and classification tasks.

\ 

{\bf Contributions.} Firstly, we demonstrate the existence of a critical value $\alpha_0$, dependent on the underlying distribution's geometric parameters and its intrinsic dimensionality. For values $\alpha > \alpha_0$, both classification and clustering tasks become feasible for large sample sizes, which implies that the derivation of a meaningful distance measurement hinges upon $\alpha$ exceeding this pivotal threshold. Under regularity assumptions, we show that $\alpha_0$ scales linearly with the dimension. 
Contrary to this point, we also highlight that using large values of $\alpha$ leads to drawbacks due to increased variability in the sample Fermat distance.
This effect has a negative impact on the distance estimation for finite samples. We prove that the variance of the sample Fermat distance scales exponentially with $\alpha$ in dimension one, and we obtain exponential bounds in terms of $\alpha/d$ in higher dimensions. We conclude that a sensible choice for $\alpha$ is crucial and provide guidelines to perform this choice.

To illustrate our findings, we conduct experiments on synthetic datasets and observe that there is a practical critical window of values of $\alpha$ where the performance is optimal.
Although the definitions and results presented in this study are relevant to both the supervised context of classification and the unsupervised tasks of clustering, our subsequent analysis in the following sections will be framed within the context of clustering analysis.

\section{Population clusters and a lower bound on \texorpdfstring{$\alpha$}{the power}}
\label{sec:min-alpha}

We start with a definition of clusters that extends the one given by the level sets of a density.

\begin{defn}[Clusters]
\label{clusteringcond}
For a measure with density $f$ (with respect to the volume form) on the Riemannian manifold $\M \subset \R^D$ and a family of sets $\mathcal C:= (C_i)_{i=1}^m$, $C_i \subset \R^D$, we say that $\mathcal C$ are $\alpha-$clusters of $f$ if there exists $\epsilon >0$ such that
\begin{align}
 D_\alpha^d(x,y) \le D_\alpha^d(x, y') - \epsilon,
 \text{ for all } 1 \le i \le m, j \neq i, x,y \in C_{i}, y'\in C_j.
 \label{eq:first}
\end{align}
If this holds for some $\alpha\ge 1$, we will say that the clustering problem $(\mathcal C, f)$ is $\alpha-$feasible or just feasible.
\end{defn}

Note that this definition of clusters aligns with clustering methods that find clusters by comparing intra-cluster distances with inter-cluster distances. In practice, this is usually done by finding centers or centroids $c_1, \dots, c_m$ (\textit{e.g.}, $K-$medoids or $K-$means techniques) such that,
\begin{align}
 D_\alpha^d(x,c_i) \le D_\alpha^d(x, c_j) - \epsilon,
 \text{ for all } 1 \le i \le m, j \neq i, x \in C_{i}.
\end{align}
This last notion is the one that is used to find the empirical centroids (\textit{i.e.}, estimators of $c_1, \dots, c_m$) and then the clusters. We could adopt this definition and state our results regarding this property, but we prefer to use \eqref{eq:first} to avoid dealing with centroid estimations and their consistency properties. We remark that all the statements and definitions below can be rephrased to fit the notion of clusters that involves centroids.

If a macroscopic clustering problem is feasible, we define the \emph{critical parameter} as the least $\alpha_0 \ge 1$ for which \eqref{eq:first} holds for every $\alpha > \alpha_0$. More precisely,
\begin{equation}
\alpha_0= \inf \{ \alpha'\ge 1 \colon \mathcal C \text{ are $\alpha-$clusters of $f$ for all } \alpha > \alpha'\}.
\end{equation}
We will demonstrate that using the Fermat distance in our definition enables us to address various natural scenarios characterized by clusters effectively.
This also generalizes other notions of clusters based on level sets of $f$ (see Proposition \ref{prop_al0} below). We will also show the existence of a finite critical parameter $\alpha_0$ under mild conditions. Observe that the clusters of $f$ need not be uniquely defined.

\subsection{Preliminaries}

We first introduce notations used in the sequel.
Given a set $C \in \R^D$, \textit{the distance function} to $C$, denoted by $d_C$, is defined by
\begin{equation}
    d_C(x) = \inf_{y \in C} | x - y |.
\end{equation}
Notice that since $\overline{C} = \left\{x \in \R^D \, : \, d_C(x)=0 \right\}$, the distance function $d_C$ fully characterize $C$ as soon as $C$ is closed.

We call \textit{$r$-offset} of $C$, denoted by $C^r$, the set of points at distance at most $r$ of $C$, or equivalently the sublevel set defined by
\begin{equation}
    C^r = \left\{x \in \R^D\,|\,d_C(x) \leq r \right\}. 
\end{equation}

Finally, we define the \textit{reach} of $C$, denoted as $\text{rch}(C)$ as the largest value $r>0$ such that for all $y \in C^r$, there exists
a unique $x \in C$ such that $|x-y|=d_C(y)$.
The reach of a set quantifies how far a compact set $C$ is from being convex.
Notice that $\text{rch}(C)$ is small if either $C$ is not smooth or if $C$ is close to being self-intersecting. 
On the other hand, if $C$ is convex, then $\text{rch}(C) = \infty$.

Consider two disjoint, connected and compact sets $A, B \subset \R^D$ such that $C = A \cup B$. We denote the shortest Euclidean distance between $A$ and $B$ by 
\begin{equation}\label{d_star}
    \text{dist}(A,B) = \inf \left\{|a - b|\,:\,a \in A,\,b \in B \right\}.
\end{equation}
If $\text{dist}(A,B) = \delta > 0$ and if we call $p \in \mathbb{R}^D \smallsetminus C$ the middle point of a shortest straight line from $A$ to $B$, then the projection of $p$ onto $C$ is not unique as it has at least one candidate on $A$ and one on $B$. 
We can then state that $\text{rch}(C) \leq \delta/2$. 
Moreover, if $A$ and $B$ are convex, $\text{rch}(C) = \delta/2$.
The interested reader can refer to \cite{aamari2022optimal, federer1959curvature, chola2023} for a more detailed introduction to these concepts.

\subsection{Existence of a critical parameter}

We can now state the existence of a critical parameter $\alpha_0$.

\begin{ppn}\label{prop_al0}
Let $\mathcal C = (C_i)_{i=1}^m$ be a family of compact and connected sets such that
$C = \bigcup_{i=1}^m C_i$ has positive reach $\tau > 0$. Suppose that $f(x) \ge a_1$ for all $x \in C$, and $f(x) \le a_0$ for all $x \notin C$, with $a_1 > a_0$. Then, there exists a positive constant $\beta_0 \geq 0$ depending on the diameters of the $C_i$'s, the reach of $C$ and the intrinsic dimension $d$ of $C$ such that for all
\begin{equation}\label{al_phase:general}
    \alpha > 1 + d\,\, \dfrac{\beta_0}{\log(a_1/a_0)}\, ,
\end{equation}
the macroscopic clustering problem $(\mathcal C, f)$ is $\alpha-$feasible. If the length of geodesics in $C$ is uniformly bounded, the constant $\beta_0$ can be taken independent of $d$ (and hence, the bound from below for the critical parameter is linear in $d$).
\end{ppn}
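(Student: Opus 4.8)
The plan is to directly estimate the macroscopic Fermat distance $D_\alpha^d$ both inside a single cluster $C_i$ and between two different clusters $C_i$ and $C_j$, and to show that the inter-cluster distance blows up (relative to the intra-cluster distance) as $\alpha$ grows, the blow-up rate being governed by the ratio $a_1/a_0$. Write $\phi_\alpha := \frac{\alpha-1}{d}$, so that the integrand in \eqref{D_f} is $f^{-\phi_\alpha}$. For $x,y$ in the same cluster $C_i$, any rectifiable path inside $C_i$ sees $f \ge a_1$, hence
\begin{equation}
D_\alpha^d(x,y) \le \int_\gamma f^{-\phi_\alpha} \le a_1^{-\phi_\alpha}\,\mathrm{length}(\gamma),
\label{eq:intra}
\end{equation}
and taking $\gamma$ to be a near-geodesic of $C_i$ gives $D_\alpha^d(x,y) \le a_1^{-\phi_\alpha}\,L_i$, where $L_i$ is (an upper bound for) the intrinsic diameter of $C_i$. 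The positive-reach hypothesis $\tau > 0$ is exactly what guarantees each $C_i$ is a $C^{1,1}$ submanifold-with-boundary on which geodesics exist and $L_i$ is finite and controlled by $\mathrm{diam}(C_i)$ and $\tau$ (via standard reach-to-intrinsic-metric comparison, e.g. \cite{aamari2022optimal, federer1959curvature}).

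**Next I would** bound the inter-cluster distance from below. Fix $x \in C_i$, $y' \in C_j$, $j \neq i$, and let $\gamma$ be any path from $x$ to $y'$ realizing (within $\epsilon$) the infimum in \eqref{D_f}. Because $C_i$ and $C_j$ are disjoint compact sets, $\mathrm{dist}(C_i, C_j) \ge \delta$ for some $\delta > 0$ determined by the geometry (and $\delta \ge$ something like $2\tau$ in the worst case, using the reach argument from the Preliminaries); more importantly, the curve $\gamma$ must cross the "low-density corridor" $\M \setminus C$, or at least must leave a neighborhood of $C_i$ where we only know $f \le a_0$. Restricting the integral to the portion $\gamma'$ of $\gamma$ lying outside $C$ (or in a fixed collar where density is $\le a_0$), and noting that this portion has length at least some $\ell_0 > 0$ depending on $\tau$ and the separation of the clusters, we get
\begin{equation}
D_\alpha^d(x,y') \ge \int_{\gamma'} f^{-\phi_\alpha} \ge a_0^{-\phi_\alpha}\,\ell_0.
\label{eq:inter}
\end{equation}
Combining \eqref{eq:intra} and \eqref{eq:inter}, the clustering inequality $D_\alpha^d(x,y) \le D_\alpha^d(x,y') - \epsilon$ holds as soon as $a_1^{-\phi_\alpha} L_i < a_0^{-\phi_\alpha}\ell_0$, i.e. $(a_1/a_0)^{\phi_\alpha} > L_i/\ell_0 =: e^{\beta_0}$ for all $i$, which rearranges to $\phi_\alpha > \beta_0/\log(a_1/a_0)$, i.e. $\alpha > 1 + d\,\beta_0/\log(a_1/a_0)$. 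This is exactly \eqref{al_phase:general}, with $\beta_0 = \max_i \log(L_i/\ell_0)$ depending only on the diameters of the $C_i$, the reach $\tau$, and $d$ (through $\phi_\alpha$ and the reach-to-geodesic comparisons).

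**The main obstacle** is making the lower bound \eqref{eq:inter} honest: one must show that a path joining $C_i$ to $C_j$ cannot "cheat" by squeezing through regions of high density — but by hypothesis high density ($\ge a_1$) only occurs on $C$ itself, and $C_i$, $C_j$ are distinct connected components of $C$, so any such path genuinely spends a definite length $\ell_0$ in $\{f \le a_0\}$. Quantifying $\ell_0$ uniformly in terms of $\tau$ and $\mathrm{dist}(C_i,C_j)$ is where the positive-reach hypothesis does the real work: it prevents the components of $C$ from being arbitrarily close or from having cusps that would let $\ell_0 \to 0$. For the final claim, if the intrinsic diameters $L_i$ are uniformly bounded (independently of $d$) — which is an extra hypothesis on the geometry of $C$, not automatic since reach bounds alone give $d$-dependent volume/diameter relations — then $\beta_0 = \max_i \log(L_i/\ell_0)$ is $d$-independent, and the threshold $1 + d\,\beta_0/\log(a_1/a_0)$ is manifestly linear in $d$. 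I would also double-check the edge case $a_1/a_0$ close to $1$ (threshold blows up, consistent with clusters being hard to separate) and the trivial normalization ensuring $\beta_0 \ge 0$, using that $L_i \ge \ell_0$ can be arranged or absorbed by taking the max with $0$.
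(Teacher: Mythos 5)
Your overall architecture is the same as the paper's: bound the intra-cluster Fermat distance above by $a_1^{-(\alpha-1)/d}$ times a controlled path length, bound the inter-cluster distance below by $a_0^{-(\alpha-1)/d}$ times the mandatory excursion length through the low-density region (which the reach hypothesis quantifies as $2\tau$, since $\tau \le \min_{i\neq j}\mathrm{dist}(C_i,C_j)/2$), and then compare the two exponentials to solve for $\alpha$. The lower bound, the final algebra, and your observation that uniform boundedness of geodesic lengths is a genuinely extra hypothesis all match the paper.

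The gap is in the intra-cluster upper bound, which is where the real work lies. You justify the finiteness of the intrinsic diameter $L_i$ by asserting that positive reach makes each $C_i$ a $C^{1,1}$ submanifold-with-boundary; this is false (a solid triangle has infinite reach but its boundary is not $C^{1,1}$), and the ``standard reach-to-intrinsic-metric comparisons'' you invoke are local two-point estimates for submanifolds, not global intrinsic-diameter bounds for general positive-reach sets. The paper fills this hole constructively: cover $C$ by $N_r$ balls of radius $r<\tau$, connect $x$ to $y$ through the neighborhood graph on the ball centers lying in $C_i$ (a simple path of Euclidean length at most $2rN_r$), and push this polygonal path onto $C_i$ with the nearest-point projection, which Federer's theorem makes $\frac{\tau}{\tau-r}$-Lipschitz on $C^r$. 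This yields $L_i \le \frac{\tau}{\tau-r}\,2rN_r$ and, crucially, makes visible that $\beta_0$ depends on $d$ through the covering number $N_r \le c\,r^{-d}\vee 1$ --- which is exactly why the threshold is only linear in $d$ under the extra geodesic-bound hypothesis and quadratic in $d$ otherwise. Without some such construction, your $\beta_0=\max_i \log(L_i/\ell_0)$ is not yet shown to be finite, let alone to have the stated dependence on the diameters, the reach, and $d$.
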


\begin{proof}
The proof of this proposition is postponed to Appendix \ref{proofs}.
\end{proof}

\begin{rem}
As pointed out before,  the reach encodes the geometric complexity of the problem. A small reach is associated with a more complex problem. This is present in the previous result in the dependence of the constant $\beta_0$ on the reach. The constant $\beta_0$ can be computed explicitly, and in fact, an explicit formula is given in the course of the proof of the proposition. When $C$ is such that the geodesic path lengths are not uniformly upper bounded, the parameter $\beta_0$ turns out to scale proportional to $d$, giving that the bound from below for $\alpha_0$ scales as $d^2$. 
 \end{rem}

\begin{rem}
The hypothesis of Proposition \ref{prop_al0} forces $f$ to be discontinuous (otherwise, we necessary have $a_0=a_1$). We prefer to first deal with this restrictive situation to simplify the exposition, but in fact, this restriction can be removed to cover the case when $f$ is a continuous density by defining $a_0$ more carefully. In that case, we choose $a_0<a_1$ in such a way that there is a region around each cluster that separates the level sets $f^{-1}(a_0)$ and $f^{-1}(a_1)$. It is worth observing the behavior of the lower bound as $a_0 \to 0$ and as $a_0 \to a_1$.
\end{rem}

\begin{ppn}\label{prop_al0_disc} Let $\mathcal C= (C_i)_{1\leq i \leq m}$ and $C = \bigcup C_i$ as before. Suppose that $f \ge a_1$ in $C$, and let $a_0$ be such that $0<\eta = \inf \Bigl\{|s-t|\,:\,s \in f^{-1}([0;a_0]),\,t \in f^{-1}([a_1;\infty]) \Bigr\} < \tau = \text{rch}(C)$. Then, there exists $c>0$ such that for every $0<r<\tau-\eta$
\begin{equation}
\label{al_phase:discontinuous}
\alpha > 1 + d \, \dfrac{\log\left(\frac{ r}{\tau-(r+\eta)} (cr^{-d} \vee 1)\right)}{\log(a_1/a_0)},
\end{equation}
the macroscopic clustering problem is $\alpha-$feasible. If the length of geodesics in $C$ is
uniformly bounded, we can take $\alpha$ as in \eqref{al_phase:general} with the constant $\beta_0$ independent of $d$ (and hence the bound from below for the critical parameter is linear in $d$).
\end{ppn}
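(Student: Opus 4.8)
The idea is to establish the two competing estimates behind Definition~\ref{clusteringcond}: an upper bound on $D_\alpha^d(x,y)$ when $x,y$ belong to the same cluster $C_i$, and a strictly larger lower bound on $D_\alpha^d(x,y')$ when $x\in C_i$ and $y'\in C_j$ with $j\ne i$, the admissible (and point‑uniform) $\epsilon$ being the gap between them. The stated condition on $\alpha$ is exactly the one making the lower bound beat the upper bound.

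For the upper bound, fix $x,y\in C_i$. Since $f\ge a_1$ on $C\supseteq C_i$, any rectifiable path lying inside $C_i$ has Fermat cost at most $a_1^{-(\alpha-1)/d}$ times its length, so $D_\alpha^d(x,y)\le \ell_i\, a_1^{-(\alpha-1)/d}$ with $\ell_i$ the intrinsic diameter of $C_i$. If the geodesics of $C$ are uniformly bounded this is a constant, which yields the last assertion and the form \eqref{al_phase:general}. In general I would replace $\ell_i$ by a reach‑dependent covering estimate: since $\mathrm{rch}(C)=\tau$, inside any ball of radius comparable to $\tau$ the piece $C_i$ is a Lipschitz graph over its tangent space, so two of its points lying in a common ball of radius $r<\tau$ are joined inside $C_i$ by an arc of length $O(r)$; covering $C_i$ by $N\le (c\,r^{-d})\vee 1$ such balls (with $c$ depending only on the Euclidean diameters of the $C_i$ and on $d$) and concatenating these arcs gives
\[
 D_\alpha^d(x,y)\ \le\ \kappa\, r\,\big((c\,r^{-d})\vee 1\big)\, a_1^{-(\alpha-1)/d}
\]
for an absolute constant $\kappa$; this is the numerator in \eqref{al_phase:discontinuous}.

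For the lower bound, fix $0<r<\tau-\eta$ and let $\gamma$ be any path on $\M$ from $x\in C_i$ to $y'\in C_j$. From $\mathrm{rch}(C)\le\mathrm{dist}(C_i,C_j)/2$ we get $\mathrm{dist}(C_i,C_j)\ge 2\tau$, and for every $\rho<\tau$ the offset splits as $C^\rho=\bigsqcup_k C_k^\rho$ into disjoint pieces carrying a well‑defined nearest‑point projection. Hence $t\mapsto d_C(\gamma(t))$ cannot stay below $\tau$ — otherwise $\gamma$ would be a connected subset of $\bigsqcup_k C_k^{<\tau}$ meeting both $C_i^{<\tau}$ and $C_j^{<\tau}$ — so $\gamma$ contains a sub‑arc on which $d_C$ takes values in $[r+\eta,\tau]$ and which joins $\{d_C=r+\eta\}$ to $\{d_C=\tau\}$. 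As $d_C$ is $1$‑Lipschitz this sub‑arc has Euclidean, hence Riemannian, length at least $\tau-(r+\eta)$; on it $d_C\ge r+\eta\ge\eta$, so by the choice of $a_0$ separating the level sets (as discussed after Proposition~\ref{prop_al0}) we have $f\le a_0$ there, and since $\alpha\ge1$,
\[
 D_\alpha^d(x,y')\ \ge\ \big(\tau-(r+\eta)\big)\, a_0^{-(\alpha-1)/d}.
\]
Comparing the two displays, $(\mathcal C,f)$ is $\alpha$‑feasible once $\kappa\,r((c r^{-d})\vee1)\,a_1^{-(\alpha-1)/d}<(\tau-(r+\eta))\,a_0^{-(\alpha-1)/d}$, i.e.\ once $(a_1/a_0)^{(\alpha-1)/d}>\kappa\,r((c r^{-d})\vee1)/(\tau-(r+\eta))$; taking logarithms (and absorbing $\kappa$ into $c$) this is exactly \eqref{al_phase:discontinuous}, while the uniformly‑bounded‑geodesics case uses $\ell_i$ instead of $\kappa r((cr^{-d})\vee1)$ and gives \eqref{al_phase:general} with $\beta_0$ independent of $d$.

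The two steps carrying the real content are the reach‑dependent covering estimate for the intrinsic geometry of the clusters — the source of the $(c\,r^{-d})\vee1$ factor, and of the linear‑versus‑quadratic‑in‑$d$ dichotomy mentioned after Proposition~\ref{prop_al0} — and the claim that $f\le a_0$ throughout the region $\{r+\eta\le d_C\le\tau\}$ traversed by $\gamma$. I expect the latter to be the main obstacle: it amounts to saying that the transition layer $\{a_0<f<a_1\}$ does not protrude more than $\eta$ beyond $C$, which is where the hypothesis $\eta<\tau$ together with a suitable choice of $a_0$ must be used, and it must be deduced carefully from the stated assumptions.
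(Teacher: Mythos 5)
Your overall strategy is the paper's: bound the intra\-/cluster Fermat distance from above by an $r$-covering argument with $N_r\le (c\,r^{-d})\vee 1$ balls, bound the inter\-/cluster distance from below using the gap $\mathrm{dist}(C_i,C_j)\ge 2\tau$ forced by the reach, and compare; the geodesic\-/bounded case is handled identically. The interesting difference is where the factor $\frac{r}{\tau-(r+\eta)}$ comes from. In the paper it lives entirely in the \emph{upper} bound: the authors first prove the auxiliary fact $\mathrm{rch}(C^\eta)=\tau-\eta$ (via $(C^\eta)^\delta=C^{\eta+\delta}$), cover the offset $C^\eta$ by balls of radius $r<\tau-\eta$, build the polygonal path in $C^{\eta+r}$, and push it into $C_i$ by composing two nearest\-/point projections with Lipschitz constants $\frac{\tau-\eta}{\tau-\eta-r}$ and $\frac{\tau}{\tau-\eta}$, while the lower bound is reused verbatim from \eqref{proof:al0_appendix_lowbound} as $2\tau\,a_0^{(1-\alpha)/d}$. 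You instead keep the covering on $C$ itself and extract $\tau-(r+\eta)$ from the \emph{lower} bound, as the length of the sub-arc of $\gamma$ on which $d_C\in[r+\eta,\tau]$ (the disjointness of the sets $\{d_{C_k}<\tau\}$ and the $1$-Lipschitzness of $d_C$ make that step sound). Both routes land on the same inequality up to constants. One caveat on your upper bound: the claim that two points of $C_i$ in a common $r$-ball are joined inside $C_i$ by an arc of length $O(r)$ because $C_i$ is locally a Lipschitz graph uses more than positive reach (positive-reach sets need not be manifolds); the paper's device — build the chain of ball centers in the ambient offset and map it into $C_i$ with the Federer projection \cite[Theorem 4.8]{federer1959curvature} — gives your $O(rN_r)$ bound under the stated hypotheses and is the preferable way to close that step.

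The obstacle you flag at the end is genuine but is shared with, not created by, your argument. The definition of $\eta$ only bounds the distance between $f^{-1}([0,a_0])$ and $f^{-1}([a_1,\infty])$; it does not force $f\le a_0$ on $\{d_C\ge \eta\}$, since the transition layer $\{a_0<f<a_1\}$ could protrude far beyond $C$, and then neither your lower bound nor the paper's holds. The paper does not address this: it declares the lower bound ``the same as before,'' which tacitly assumes $f\le a_0$ outside (a neighbourhood of) $C$, a strengthening of the hypothesis that is only hinted at in the remark preceding the proposition. So on this point your proof is on the same footing as the published one, and your explicit identification of the missing assumption is a merit rather than a defect.
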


\begin{proof}
The proof is postponed to Appendix \ref{proofs}.
\end{proof}


\section{Empirical clusters}
\label{sec:mic_to_mac}

In this section, we leverage the concepts established in the preceding section to address the clustering quandary in the context of finite samples. Our goal is to ascertain the specific conditions under which a clustering problem can be solved solely by harnessing the properties of the Fermat distance. 

\begin{defn}[Clustering]
Given a family of clusters $(C_i)_{i=1}^m \subset \R^D$ and points $Q_n = \{q_1, \dots, q_n\} \subset \R^D$,  we say that the empirical clustering problem is $\alpha-$feasible for $Q_n$ if there exists $\epsilon>0$ such that
\begin{equation}
   {n^{(\alpha-1)/d}} D_{Q_n,\alpha}(x, y) \le    {n^{(\alpha-1)/d}} D_{Q_n,\alpha}(x, y') - \epsilon,   \label{eq:cluster_samplemicro}
\end{equation}
for all $1 \le i \le m$, $x,y \in Q_n \cap C_{i}$, and $y' \in Q_n \cap C_j$, $j \neq i$. When $Q_n$ is random, we call $F(\alpha,\epsilon, n)=F(\alpha, \epsilon, Q_n, C_1, \dots, C_m)$ the event under which the samples are such that \eqref{eq:cluster_samplemicro} holds.
\end{defn}

The idea here is that a simple classification rule using Fermat distance should be sufficient to reach a perfect clustering. 
We can now state the main result of this section, which relates the macroscopic problem to the microscopic one.

\begin{ppn}\label{mictomac}
Let $\mathcal C = (C_i)_{1\leq i \leq m}$ be a family of disjoint, compact and connected sets and $C = \bigcup_{i=1}^m C_i$. Assume $(\mathcal C, f)$ is $\alpha-$feasible for $\alpha$ large enough. Assume further that $C$ is a closed $d-$dimensional Riemannian manifold embedded in $\R^D$ and $f$ is a smooth probability density function with $\liminf_{\mathcal M} f(x) >0$.

Then there exists $\epsilon >0$, constants $\delta, c,\gamma >0$ and an integer $\bar n$ such that for $n\ge \bar n$, we have
for $\alpha > \alpha_0$, 
$$
\P(F(\alpha,\epsilon,n)^c) \le  e^{- c n^\gamma}.
$$
Conversely, for every $\delta >0$ there is  $\alpha_0 - \delta < \alpha < \alpha_0$ such that for every $n\ge \bar n$,
$$
\P(F(\alpha,\epsilon,n)^c) \ge 1- e^{- c n^\gamma}.
$$

\end{ppn}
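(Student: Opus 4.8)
The plan is to transfer the macroscopic feasibility in Definition \ref{clusteringcond} to the empirical inequality \eqref{eq:cluster_samplemicro} using the almost-sure convergence \eqref{eq:fermat-convergence}, but made quantitative with large-deviation control. First I would invoke the convergence result $n^{(\alpha-1)/d} D_{Q_n,\alpha}(x,y) \to \mu(\alpha,d) D_\alpha^d(x,y)$, which holds under the stated regularity ($C$ a closed $d$-dimensional Riemannian manifold, $f$ smooth with $\liminf f>0$); the key point is that in this setting one has exponential concentration: there exist $c,\gamma>0$ such that for any fixed pair $x,y$, $\P\bigl(|n^{(\alpha-1)/d}D_{Q_n,\alpha}(x,y) - \mu(\alpha,d)D_\alpha^d(x,y)| > \epsilon'\bigr) \le e^{-cn^\gamma}$. (This type of bound follows from the percolation-style arguments underlying \eqref{eq:fermat-convergence}, e.g.\ via a path-counting / Efron–Stein or bounded-differences argument on the point process; I would cite \cite{original, hwang2016} for the rate.) Since $\mathcal C$ is $\alpha$-feasible with gap $\epsilon$, choosing $\epsilon' = \epsilon/3$ and taking a union bound over the $O(n^2)$ pairs $(x,y), (x,y')$ with $x,y\in Q_n\cap C_i$, $y'\in Q_n\cap C_j$ — the polynomial factor $n^2$ is absorbed into the stretched-exponential $e^{-cn^\gamma}$ by shrinking $c$ — gives that on an event of probability $\ge 1 - e^{-cn^\gamma}$ every empirical Fermat distance is within $\epsilon/3$ of its macroscopic limit, whence \eqref{eq:cluster_samplemicro} holds with gap $\epsilon/3$. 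This proves the first (direct) assertion, with $\bar n$ chosen so the union bound and the asymptotic regime are valid.

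For the converse I would argue that infeasibility at the macroscopic level propagates to the empirical level. Fix $\delta>0$. By the definition of $\alpha_0$ as an infimum, for $\alpha$ slightly below $\alpha_0$ (within $\delta$) the macroscopic clustering problem $(\mathcal C,f)$ is \emph{not} $\alpha$-feasible — actually one wants a robust failure: there is a $\delta'>0$ and points $x,y\in C_i$, $y'\in C_j$ with $D_\alpha^d(x,y) \ge D_\alpha^d(x,y') - \delta'$... more precisely $D_\alpha^d(x,y) > D_\alpha^d(x,y') + \delta'$ or at least $\ge D_\alpha^d(x,y')$. Here the subtle structural fact I would establish (or extract from the proof of Proposition \ref{prop_al0}) is that $\alpha \mapsto$ "feasibility gap" is monotone/continuous enough that strict infeasibility holds on an interval $(\alpha_0-\delta,\alpha_0)$; this is where one must be careful, since $\alpha_0$ is merely an infimum of $\alpha'$ for which feasibility holds \emph{for all} $\alpha>\alpha'$. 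I would then approximate $x,y,y'$ by their nearest neighbors in $Q_n$ (distances $O(n^{-1/d})\to 0$, controllable with exponential probability by a covering argument since $f$ is bounded below) and again apply the quantitative convergence to both pairs: on an event of probability $\ge 1 - e^{-cn^\gamma}$ the two rescaled empirical distances are within $\delta'/3$ of their limits, so $n^{(\alpha-1)/d}D_{Q_n,\alpha}(x,y) > n^{(\alpha-1)/d}D_{Q_n,\alpha}(x,y') - \epsilon$ fails for that triple for every $\epsilon < \delta'/3$, i.e.\ $F(\alpha,\epsilon,n)^c$ occurs. Fixing $\epsilon$ smaller than both $\delta'/3$ and the gap used in the direct part handles the common $\epsilon$ in the statement.

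The main obstacle, and the part I expect to require the most work, is the \textbf{quantitative (stretched-exponential) concentration of the sample Fermat distance around its macroscopic limit} with an \emph{explicit} rate $e^{-cn^\gamma}$, uniformly enough to survive the union bound over pairs of sample points. The a.s.\ convergence \eqref{eq:fermat-convergence} as cited does not by itself give a rate; one needs to revisit its proof (sub/super-additivity along geodesics, greedy-path upper bounds, and a lower bound via a renewal/percolation argument) and extract concentration — most naturally a bounded-differences inequality for the functional $Q_n \mapsto D_{Q_n,\alpha}(x,y)$ under resampling one point, combined with an a-priori moment or tail bound on the number of points a near-geodesic visits. A secondary technical point is the interplay between the fixed points $x,y$ and their random nearest neighbors $q_1,q_k$ in Definition \ref{D_Q}, but since $f$ is bounded below on a compact manifold these nearest-neighbor distances are $O((\log n/n)^{1/d})$ with exponential probability, and their $\alpha$-th powers are negligible after rescaling by $n^{(\alpha-1)/d}$ when $\alpha>1$, so this causes no essential difficulty.
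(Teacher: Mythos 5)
Your direct part is essentially the paper's argument: concentrate $n^{(\alpha-1)/d}D_{Q_n,\alpha}$ around $\mu D_\alpha^d$ within $\epsilon/3$ and chain the inequalities through the macroscopic gap $\epsilon$. The ``main obstacle'' you flag --- a stretched-exponential concentration rate $e^{-cn^\gamma}$ uniform over sample points --- is not re-derived in the paper: it is imported wholesale as Proposition 2.6 of \cite{borghini2020intrinsic}, already stated uniformly over all $x\in Q_n$, so neither the union bound over $O(n^2)$ pairs nor the bounded-differences machinery you sketch is needed. Where you genuinely diverge is the converse. You propose a direct argument: exhibit a witness triple $x,y,y'$ of macroscopic failure with a quantitative gap $\delta'$, approximate it by sample points, and transfer. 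As you note, extracting a \emph{robust} failure from the bare statement that $(\mathcal C,f)$ is not $\alpha$-feasible is delicate (non-feasibility only gives $\sup\left(D_\alpha^d(x,y)-D_\alpha^d(x,y')\right)\ge 0$, attained by compactness but with no strict margin). The paper sidesteps this entirely by arguing in the contrapositive: on the same concentration event $A_{n,\alpha}$, empirical feasibility with gap $\epsilon$ would force macroscopic feasibility with gap $\epsilon/(3\mu)$, contradicting non-feasibility; hence $A_{n,\alpha}\subset F(\alpha,\epsilon,n)^c$. This is cleaner because no positive macroscopic defect is ever needed --- though the paper's version quietly requires passing from sample points to arbitrary points of $C_i$, which is what its Lemma \ref{lemma1} (uniform continuity of $D_\alpha^d$) is there for, and which plays the same role as your nearest-neighbor approximation step. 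In short: your first half matches the paper modulo an unnecessary re-proof of the concentration input, and your second half would work but is harder to close than the paper's contradiction argument; adopting the contrapositive removes the one step you correctly identified as fragile.
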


\begin{proof}
The proof is given in Appendix \ref{proofs}
\end{proof}

\subsection{Example with piecewise constant densities (clutter)}
\label{Toy_model}

In $\R^d$, we denote the $d-$dimensional ball centered in the origin $0$ with radius $r$ by $\mathbb{B}_d(r)$, and the volume of $\mathbb{B}_d(1)$ by $\omega_d$. Consider two $d-$dimensional ring $C_1, C_2$ centered at $0$ contained in an hypercube $K$: $C_1 = \B_d(\frac{5}{4}) \smallsetminus \B_d(1)$, $C_2 = \B_d(\frac{10}{4})\smallsetminus \B_d(\frac{9}{4})$, $K=[-3,3]^d$.
Let $P_C$ be the uniform measure supported on $C = C_1 \cup C_2$, and $P_K$ be the uniform measure supported on $K$. Suppose $Q_n = \left\{q_1,\dots,q_n \right\}$ is a random sample generated from a mixture of $P_C$ and $P_K$,
\begin{equation}
X_1, \dots, X_n \equiv P_{cl} \sim \lambda P_C + (1-\lambda)P_K,
\end{equation}
with $\lambda \in [0,1]$ being a proportion parameter that quantifies the signal-to-noise ratio in the sample (see Figure \ref{fig:clutter_fig1}). 

This model is usually called \textit{clutter noise} in the literature. We are observing data points near or in $C$, but we aim at finding a bi-partition of $K$ such that $C_1$ and $C_2$ are strictly contained in two different clusters. We plug the sample Fermat distance in the $K-$medoids algorithm, that we denote Fermat $K-$medoids (see Section \ref{sec:experiment} for more details). We are interested in the relation between the parameter $\alpha$ and the predictions for any point in $C_1 \cup C_2$.

Notice that the distribution $P_{cl}$ admits a density $f$ with respect to the Lebesgue measure,
\begin{equation}
f(x) = \frac{\lambda}{|C_1|+|C_2|}\left(\mathbf{1}_{C_1}(x) + \mathbf{1}_{C_2}(x)\right) + (1 - \lambda) \frac{1}{|K|}\mathbf{1}_{K}(x),    
\end{equation}
where $|C_1|$, $|C_2|$ and $|K|$ denote the volume of $C_1$, $C_2$ and $K$, respectively.

Our goal now is to find an explicit bound for the critical parameter. The argument follows the same lines as the proof of Proposition \ref{prop_al0} but with explicit quantities. The idea is that it is sufficient to set $\alpha$ such that the Fermat cost of the shortest straight line between $C_1$ and $C_2$ and outside $C_1 \cup C_2$, is always greater than any Fermat distance between two points inside the same cluster.
We need to compute the longest geodesic distance (with respect to the Euclidean norm) lying in $C$ denoted by $L$.
This model's explicit upper bound is $L \leq {5 \pi}/{2}$.
In the proof of Proposition \ref{prop_al0}, the upper bound on $L$ is computed for a general case and depends on the $r$-covering number of $C$, with $0< r < \text{rch}(C) $. We have, $\text{rch(C)}={\text{dist}(C_1,C_2)}/{2}={1}/{2}$, $a_1={\lambda}/{(|C_1|+ |C_2|)}$, and $a_0={1-\lambda}/{|K|}$.

Any continuous path that intersects both $C_1$ and $C_2$ has an Euclidean length of at least $\text{dist}(C_1, C_2) = 1$. Then, the Fermat distance of any path lying in two distinct clusters is at least $a_0^{(1-\alpha)/d}.$ It suffices to choose $\alpha$ such that the Fermat cost between $C_1$ and $C_2$ is always greater than
\begin{equation}
    a_0^{(1-\alpha)/d} > {L}\,  a_1^{(1-\alpha)/d}, \qquad
    \alpha > 1 + d\, 
    \dfrac{\log({L})}{\log(a_1/a_0)}.
\end{equation}
We have obtained the upper bound
\begin{equation}
\alpha_0 \leq 1 + d\frac{\log\left(\frac{5\pi}{2}\right)}{\log(a_1/a_0)},
\label{eq:alpha-upper-bound-clutter}
\end{equation}
meaning that $(C_1$, $C_2)$ are $\alpha-$clusters for every $\alpha \ge \alpha_0$. 
Figure \ref{fig:clutter} shows the result of applying $K-$medoids for different choices of $\alpha$ and different values of $\lambda$ with sample size $n$.
We observe that we can achieve clustering above this threshold for the theoretical values of $\alpha_0$ found using \eqref{eq:alpha-upper-bound-clutter}. 

\begin{figure}[tb!]
\begin{centering}
 \begin{subfigure}[b]{0.80\textwidth}
 \centering
 \includegraphics[width=\textwidth]{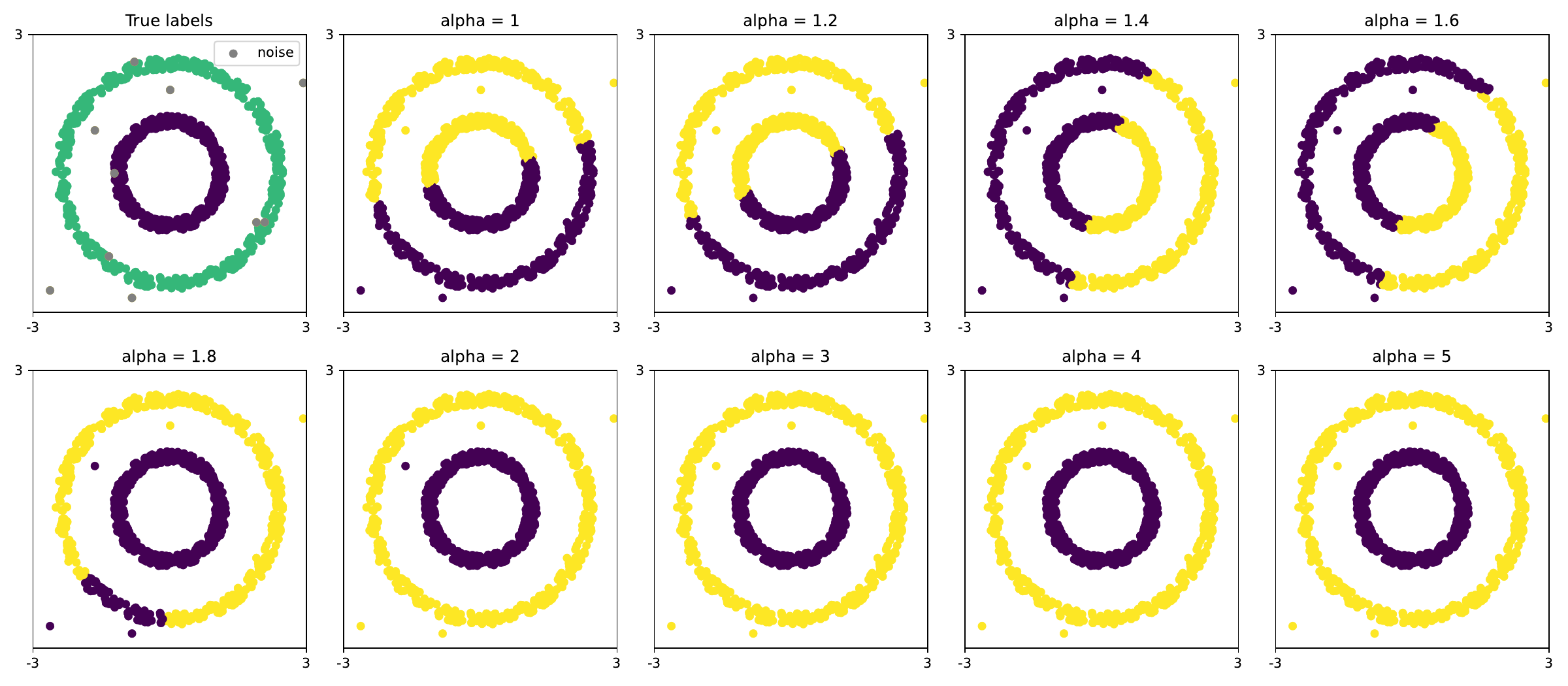}
 \caption{}
 \label{fig:clutter_fig1}
 \end{subfigure}
 \hfill
 \begin{subfigure}[b]{0.80\textwidth}
 \centering
 \includegraphics[width=\textwidth]{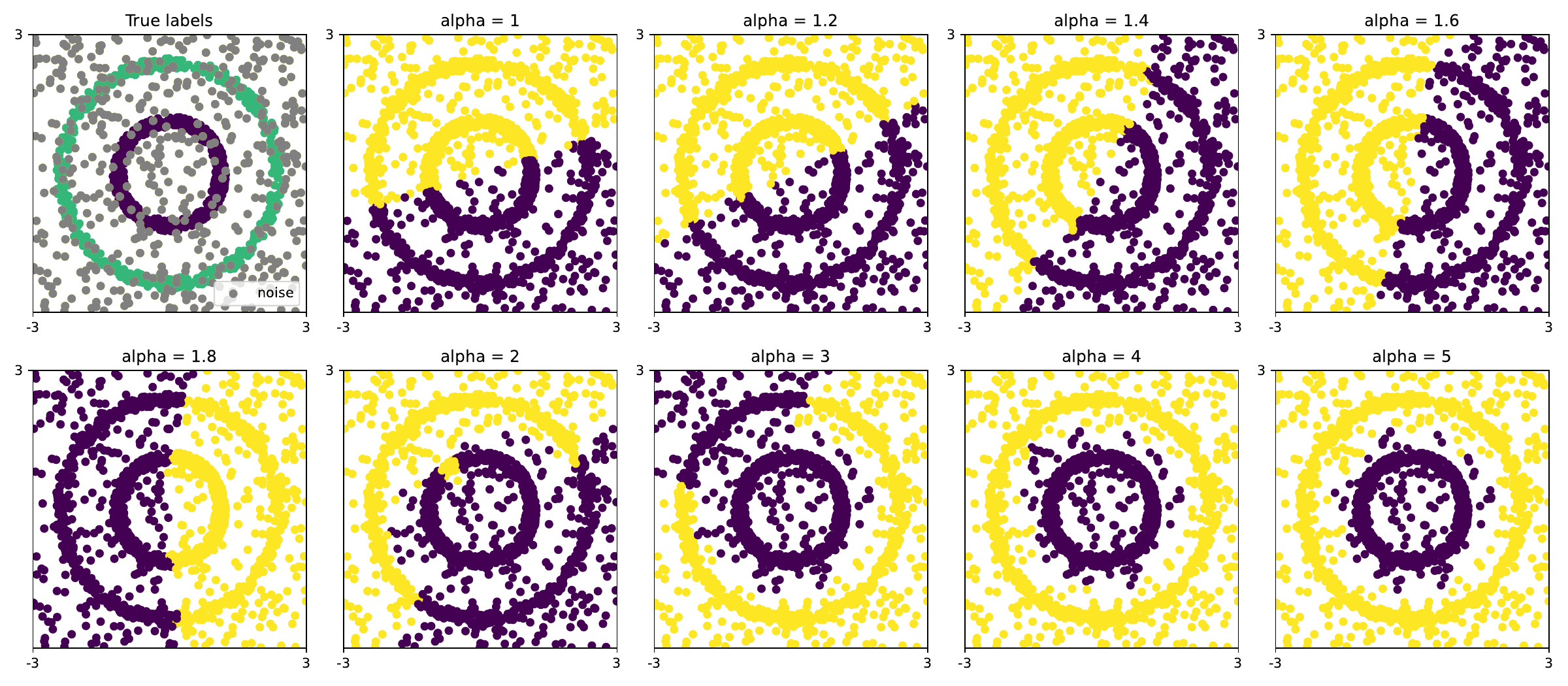}
 \caption{}
 \label{fig:clutter_fig2}
 \end{subfigure}
\caption{An example of predictions of the Fermat $K-$medoids algorithm on data with common density $f$ for (A) $\lambda=0.99$, $n=1000$ and (B) $\lambda=2/3$, $n=1500$. The theoretical values for the critical value of $\alpha$ based on \eqref{eq:alpha-upper-bound-clutter} give $\alpha_0 \leq 1.45$ and $\alpha_0 \leq 2.2$, respectively, meaning that above those values we are able to perform an efficient clustering with high probability for large $n$.  }
\label{fig:clutter} 
\end{centering}
\end{figure}

Remark that $\alpha_0$ is the theoretical critical parameter. 
In practice, we use the sample Fermat distance, so the threshold fluctuates (see Proposition \ref{mictomac}).
This synthetic example illustrates that the parameter $\alpha$ calibrates the sensitivity of the distance to both the geometry and density of the data. 
Note that when $\alpha=1$, the Fermat distance boils down to the Euclidean distance, and predictions result from the classical $K-$medoids algorithm. By pushing up the $\alpha$ value, the algorithm captures the structure of both rings despite the noise and performs a partition that perfectly identifies $C_1$ and $C_2$.


\section{Variability as a function of \texorpdfstring{$\alpha$}{the power}}
\label{sec:variability}

When opting for a large value of $\alpha$ to get a distance that aptly captures the intricacies of the geometry and the data distribution, it is essential to recognize that this choice can introduce two noteworthy challenges.
\begin{enumerate}
    \item
\textit{Heightened variability of Sample Fermat Distance}. Choosing a large value of $\alpha$ can lead to amplified variability within the sample Fermat distance.
\item
\textit{Floating point arithmetic.} There is the concern of numerical inaccuracies arising due to the utilization of larger $\alpha$ that lead to the manipulation of numbers that differ in orders of magnitude.  
\end{enumerate}

This section delves into the first of these issues, furnishing preliminary insights into the exploration of variance within the sample Fermat distance statistic.

Deriving exact characterizations of the variance of $n^{(\alpha-1)/d}D_{Q_n,\alpha}(x,y)$ in dimensions larger than one have been shown to be challenging \cite{auffinger201750, HN2} and is still an open problem in the context of First Passage Percolation. The one-dimensional case is much more tractable, but the computations are not obvious even in this case. 

Since we are interested in comparing the variability of the Fermat distance for different values of $\alpha$, we need a measure that does not depend on the scale of the statistic as $\alpha$ varies. We consider the coefficient of variation of $D_{Q_n,\alpha}^d$,
\begin{equation} \label{CV}
\sqrt{\dfrac{\Var\left(D^d_{Q_n,\alpha}\right)}{\mathbb{E}[D^d_{Q_n,\alpha}]^2}},
\end{equation}
when $\alpha$ varies and $n$ is fixed. In particular, we aim to understand the effect of the parameter $\alpha$ on the variation, giving an idea of a threshold which we might better set $\alpha$ below in practice.

\subsection{One-dimensional case}

For the one-dimensional case, we can explicitly calculate the spacings between consecutive points in the optimal path. In that case, we are going to consider a sample $Q_n = \{ q_1, q_2, \ldots, q_n \}$ of uniform independent points in $[0,1]$ and study the microscopic Fermat distance. In this simple case, it is given by,
\begin{equation}
    D_{Q_n, \alpha}^{d=1} = \sum_{i=0}^n | q^{(i+1)} - q^{(i)} |^\alpha 
    =: 
    \sum_{i=0}^n \Delta_i^\alpha
\end{equation}
with $q^{(1)}, \ldots, q^{(n)}$ the order statistics of $Q_n$, $q^{(0)}=0$ and $q^{(n+1)} = 1$. 

\begin{ppn}\label{moments:D^d=1}
    The expectation and variance of $ D_{Q_n, \alpha}^{d=1}$ for the uniform case verify
\begin{align*}
    \lim_{n \to \infty} \E [n^{\alpha - 1} \, D_{Q_n, \alpha}^{d=1}]  &= \Gamma(\alpha+1),\\
    \lim_{n \to \infty}
    n \, \Var [n^{\alpha-1} D_{Q_n, \alpha}^{d=1}]
    &= 
    \Gamma(2\alpha+1) - (\alpha^2+1) \Gamma(\alpha+1)^2
\end{align*}
\end{ppn}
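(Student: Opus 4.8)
The plan is to exploit the classical representation of uniform spacings via exponentials. Write $\Delta_i = E_i / S_{n+1}$ for $i=0,\dots,n$, where $E_0,\dots,E_n$ are i.i.d.\ $\dExp(1)$ and $S_{n+1} = \sum_{j=0}^n E_j$; this is the standard fact that the vector of spacings $(\Delta_0,\dots,\Delta_n)$ of $n$ uniform points in $[0,1]$ is distributed as $(E_0,\dots,E_n)/S_{n+1}$. Then
\begin{equation}
n^{\alpha-1} D_{Q_n,\alpha}^{d=1} = n^{\alpha-1} \sum_{i=0}^n \frac{E_i^\alpha}{S_{n+1}^\alpha} = \left(\frac{n+1}{S_{n+1}}\right)^\alpha \cdot \frac{n^{\alpha-1}}{(n+1)^\alpha} \sum_{i=0}^n E_i^\alpha .
\end{equation}
By the law of large numbers $S_{n+1}/(n+1) \to 1$ a.s.\ and in every $L^p$, so $\bigl((n+1)/S_{n+1}\bigr)^\alpha \to 1$; meanwhile $\frac{1}{n+1}\sum_{i=0}^n E_i^\alpha \to \E[E_1^\alpha] = \Gamma(\alpha+1)$, and $\frac{n^{\alpha-1}}{(n+1)^\alpha}(n+1) = (n/(n+1))^{\alpha-1} \to 1$. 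This already gives the first limit, modulo a uniform integrability argument to pass the convergence through the expectation.

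For the variance, the cleaner route is to compute exactly. Conditioning is awkward because of the $S_{n+1}^\alpha$ in the denominator, so instead I would use the beta-integral identity: for the Dirichlet-distributed vector $(\Delta_0,\dots,\Delta_n)$ on the simplex, the joint moments are known in closed form, $\E\bigl[\prod_i \Delta_i^{a_i}\bigr] = \frac{\Gamma(n+1)\prod_i \Gamma(a_i+1)}{\Gamma(n+1+\sum_i a_i)}$. Hence
\begin{align}
\E\bigl[D_{Q_n,\alpha}^{d=1}\bigr] &= (n+1)\,\frac{\Gamma(n+1)\Gamma(\alpha+1)}{\Gamma(n+1+\alpha)} = \frac{\Gamma(n+2)\Gamma(\alpha+1)}{\Gamma(n+1+\alpha)}, \\
\E\bigl[(D_{Q_n,\alpha}^{d=1})^2\bigr] &= (n+1)\,\frac{\Gamma(n+1)\Gamma(2\alpha+1)}{\Gamma(n+1+2\alpha)} + (n+1)n\,\frac{\Gamma(n+1)\Gamma(\alpha+1)^2}{\Gamma(n+1+2\alpha)}.
\end{align}
Then $\Var = \E[(D^{d=1})^2] - \E[D^{d=1}]^2$, I multiply by $n^{2(\alpha-1)}$, and take $n\to\infty$. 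Using $\Gamma(n+a)/\Gamma(n+b) \sim n^{a-b}$ one finds $n^{2\alpha-2}\E[(D^{d=1})^2] \to \Gamma(2\alpha+1) + \Gamma(\alpha+1)^2$ (the two terms contribute $n^{-1}\Gamma(2\alpha+1)\to 0$ after the extra factor of $n$ and $\Gamma(\alpha+1)^2$, respectively — more care is needed here) and $n^{2\alpha-2}\E[D^{d=1}]^2 \to \Gamma(\alpha+1)^2$; the leading terms cancel, and the $O(n^{-1})$ correction, once multiplied by the extra $n$ in the statement, produces $\Gamma(2\alpha+1) - (\alpha^2+1)\Gamma(\alpha+1)^2$. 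Concretely, I would expand $\Gamma(n+2)/\Gamma(n+1+\alpha)$ and $\Gamma(n+1)/\Gamma(n+1+2\alpha)$ to second order in $1/n$ (e.g.\ via $\log\Gamma$ asymptotics or the ratio asymptotic $\Gamma(n+a)/\Gamma(n+b) = n^{a-b}(1 + \tfrac{(a-b)(a+b-1)}{2n} + O(n^{-2}))$) and track the $1/n$ coefficient carefully.

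The main obstacle is the variance computation: the leading-order terms in $n^{2\alpha-2}\Var$ cancel, so the answer sits at order $1/n$, and one must carry the Gamma-ratio asymptotics to the next order without error. The combinatorial factor $(\alpha^2+1)$ will emerge precisely from the $1/n$ term in the expansion of $\Gamma(n+2)^2/\Gamma(n+1+\alpha)^2$ compared with that of $\Gamma(n+2)n/\Gamma(n+1+2\alpha)$ — getting the coefficient right is the delicate point. A secondary (minor) obstacle is justifying the interchange of limit and expectation for the first moment; this follows from a routine uniform-integrability bound, e.g.\ showing $\sup_n \E[(n^{\alpha-1}D_{Q_n,\alpha}^{d=1})^2] < \infty$, which the exact second-moment formula above already provides.
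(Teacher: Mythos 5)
Your proposal is correct and follows essentially the same route as the paper: the exact flat-Dirichlet moment formulas for the uniform spacings combined with the second-order ratio asymptotic $\Gamma(x+a)/\Gamma(x+b)=x^{a-b}\bigl(1+\tfrac{(a-b)(a+b-1)}{2x}+O(x^{-2})\bigr)$, with the variance emerging from the $1/n$ coefficient after the leading terms cancel. The only blemish is the displayed claim that $n^{2\alpha-2}\,\E[(D_{Q_n,\alpha}^{d=1})^2]\to\Gamma(2\alpha+1)+\Gamma(\alpha+1)^2$ (the correct limit is $\Gamma(\alpha+1)^2$), but your own parenthetical already corrects this, and carrying out the expansion as you describe does yield $\Gamma(2\alpha+1)-(\alpha^2+1)\Gamma(\alpha+1)^2$.
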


\begin{proof}
    The proof is provided in the Appendix \ref{proofs}. 
\end{proof}



This result shows how the variability of the Fermat distance increases exponentially with the value of $\alpha$.
If we consider the coefficient of variation given by 
\begin{equation}
    \sqrt{\frac{\Var(D_{Q_n, \alpha}^{d=1})}{\E [D_{Q_n, \alpha}^{d=1}]^2}}  \underset{n \rightarrow \infty}
    {\asymp}
    \frac{1}{\sqrt{n}}\sqrt{\frac{\Gamma(2\alpha+1)-(\alpha^2+1)\Gamma(\alpha+1)^2}{\Gamma{(\alpha+1)^2}}}
\end{equation}
this scales exponentially as a function of $\alpha$, given large values even for small values of $\alpha$.
Notice that this result further implies that the constant $\mu(\alpha, d)$ involved in Equation \eqref{eq:fermat-convergence} satisfies $\mu(\alpha, 1) = \Gamma(\alpha+1)$. 

\subsection{Higher dimensions}
In higher dimensions, as already mentioned, the asymptotic behavior in $n$ of the variance of Fermat distance is an open problem even for Poisson point processes in Euclidean space.
Similarly, we cannot prove a sufficiently informative bound on the variance of the Fermat distance in terms of the parameter $\alpha$. 
This is due to the difficulty in finding quantitative lower bounds for the Fermat distance.

However, we conjecture that the coefficient of variation scales exponentially in $\theta= {\alpha/d}$. This suggests that we should choose $\alpha$ in order to guarantee that $\alpha/d$ is not too large. We now prove a bound for the moments of the Fermat distance for an intensity one Poisson process in Euclidean space, showing that moments of order $k$ are at most of order  $\Gamma(k\theta+1)$.

\begin{ppn} \label{prop_HD}
Let $Q_n$ be a Poisson process with intensity n in the hypercube of $\mathbb R^d$. Then we have,
 \begin{align*}
  \E\Big(\big( n^{\frac{\alpha-1}{d}}D_{Q_n, \alpha}^{d}(0, e_1) \big)^k\Big)
  &\le 2e \, d^{\theta +\alpha/2} \theta^{\theta}(1+ \Gamma(k\theta+1)).
 \end{align*}
\end{ppn}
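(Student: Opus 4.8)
The plan is to bound the $k$-th moment of $n^{(\alpha-1)/d} D_{Q_n,\alpha}^d(0,e_1)$ by exhibiting an explicit competitor path for the infimum in \eqref{D_Q} and controlling the Euclidean edge lengths along it. First I would rescale: a Poisson process of intensity $n$ on the unit hypercube is, after the map $x \mapsto n^{1/d} x$, a Poisson process of intensity one on the hypercube of side $n^{1/d}$, and under this map $n^{(\alpha-1)/d} D_{Q_n,\alpha}^d(0,e_1)$ becomes the unweighted sample Fermat cost $D_{\cdot,\alpha}(0, n^{1/d} e_1)$ of a straight-line-type traversal of distance $n^{1/d}$ through an intensity-one process. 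So it suffices to bound $\E\big[(\text{Fermat cost of crossing a tube of length }\ell)^k\big]$ for $\ell = n^{1/d}$ and show it is $\le 2e\, d^{\theta+\alpha/2}\theta^\theta(1+\Gamma(k\theta+1))$, uniformly in $\ell$.

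The main step is the path construction and the tail estimate for a single crossing. Tile the segment from $0$ to $n^{1/d}e_1$ by consecutive disjoint cubes $B_1, B_2, \dots$ of side roughly $d^{1/2}$ (so that the diameter is $\gtrsim d^{1/2}\cdot d^{1/2}$ comparable to $d$ — this is where the $d^{\theta}$, $d^{\alpha/2}$ factors will enter); for each cube that is non-empty, pick one Poisson point $r_j$ in it, and for empty cubes skip to the next occupied one. Since the cubes have volume $\sim d^{d/2}$, the probability a given cube is empty is $e^{-cd^{d/2}}$, extremely small, and the number $N_j$ of consecutive empty cubes one must jump over before finding an occupied one is stochastically dominated by a geometric variable. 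The Euclidean length of the $j$-th jump is at most $(N_j+2)\cdot\diam(\text{cube}) \lesssim (N_j+2)\sqrt{d}$, so its $\alpha$-th power is $\lesssim (N_j+2)^\alpha d^{\alpha/2}$; summing over the $\Theta(n^{1/d})$ cubes gives a total cost that is a sum of i.i.d.-ish terms. I would then bound the $k$-th moment of this sum: write it as a sum of $M \asymp \ell/\sqrt d$ independent contributions $\xi_j \lesssim d^{\alpha/2}(N_j+2)^\alpha$ with $N_j$ geometric, use that $\E[\xi_j^p]\lesssim d^{p\alpha/2}\,p!\,$-type bounds (geometric moments), and apply a multinomial/Rosenthal-type expansion of $\E[(\sum\xi_j)^k]$. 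The dominant terms are the "diagonal" ones $\sum_j \E[\xi_j^k] \lesssim M d^{k\alpha/2} \Gamma(k\alpha+1)$ and the "fully spread" term $(\sum_j\E[\xi_j])^k \lesssim (M d^{\alpha/2})^k$. After dividing by the normalization one collects $\theta = \alpha/d$ in the exponents; the factor $\theta^\theta$ and $\Gamma(k\theta+1)$ come from rewriting $d^{k\alpha/2}\Gamma(k\alpha+1)$-type quantities via Stirling and $k\alpha = k d\theta$, and the $2e$ is slack absorbed at the end.

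The hard part will be getting the constants to land exactly as $2e\,d^{\theta+\alpha/2}\theta^\theta(1+\Gamma(k\theta+1))$ rather than merely "exponential in $\alpha/d$": one must be careful about (i) the precise side length of the tiling cubes versus $\sqrt d$ so that both the emptiness probability and the diameter bound are simultaneously favorable, (ii) converting $\Gamma(k\alpha+1) = \Gamma(kd\theta+1)$ into $\Gamma(k\theta+1)$ raised to a power times a $d$-dependent factor, via the Gautschi/log-convexity inequality for the Gamma function, and (iii) handling the randomness of $\ell = n^{1/d}$ not being an integer and boundary cubes. I would also need a clean form of the moment inequality for sums of independent nonnegative random variables with geometric-type tails — essentially $\E[(\sum_{j=1}^M \xi_j)^k] \le \big(\sum_j (\E \xi_j^k)^{1/k} + \text{lower order}\big)^k$ is too lossy, so instead I would use $\E[(\sum\xi_j)^k]\le \sum_{\text{partitions}} \binom{k}{\dots}\prod \E[\xi^{k_i}]$ and bound the partition sum crudely by $k!$ times the max over partition types, which is exactly what produces the $\Gamma(k\theta+1)$ and is routine but must be done carefully to keep the prefactor down to $2e$.
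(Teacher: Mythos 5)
Your overall strategy coincides with the paper's at the top level: rescale to a unit-intensity Poisson process on a box of side $\ell=n^{1/d}$, exhibit an explicit sub-optimal crossing path from $0$ to $\ell e_1$, and bound the $k$-th moment of $\ell^{-1}\sum_i|\tilde q_i-\tilde q_{i-1}|^\alpha$. But the path you construct is genuinely different, and the difference is where the proof lives. The paper's path is \emph{greedy}: from the current point it grows a forward cone $\mathcal{D}_t$ of volume $t^d/d$ until it first captures a Poisson point, so the successive gaps $X_i$ are i.i.d.\ with $\P(X_i\ge x)=e^{-x^d/d}$, hence $Y_i=X_i^\alpha$ has the stretched-exponential tail $e^{-y^{1/\theta}/d}$ and $\E[Y_i^\gamma]=d^{\gamma\theta}\Gamma(\gamma\theta+1)$. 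Every factor $d^\theta$, $\theta^\theta$ and $\Gamma(k\theta+1)$ in the statement is produced by this Weibull tail through a Markov/$L^\gamma$ optimization (there is no Rosenthal-type expansion at all). Your fixed tiling by cubes erases exactly this structure: the probability that a cube is empty is doubly-exponentially small, so your increments are essentially deterministic, of size a fixed multiple of the cube diameter, and no $\Gamma(k\theta+1)$ can emerge from them.

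Concretely, with cubes of side $s$ each increment satisfies $\xi_j\asymp (s\sqrt d)^\alpha$ and there are $M\asymp\ell/s$ of them, so the best bound your construction can ever give is, by Jensen, $\E\big[(\ell^{-1}\sum_j\xi_j)^k\big]\ge\big(\ell^{-1}\sum_j\E\xi_j\big)^k\asymp (C^\alpha d^{\alpha/2})^k$ (and with your stated choice $s=\sqrt d$ the base is $C^\alpha d^{\alpha}$, worse still; you need $s=O(1)$ so that the diameter is $O(\sqrt d)$ while the cube volume stays bounded below). A bound exponential in $k\alpha$ is of a different form from $2e\,d^{\theta+\alpha/2}\theta^\theta(1+\Gamma(k\theta+1))$: for $k\ge 2$ and $\theta=\alpha/d$ small the claimed right-hand side is $\asymp d^{\alpha/2}$, while your left-hand side is at least of order $d^{k\alpha/2}$. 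The obstruction is the ``fully spread'' term $(\sum_j\E\xi_j)^k$, not the diagonal one, so the Gautschi-type conversion of $\Gamma(k\alpha+1)$ into $\Gamma(k\theta+1)$ that you propose cannot repair it. (As an aside, the paper's own final integration appears to drop a power of $k$ on the constant $c$, and Jensen applied to its own path suggests the honest conclusion is of the form $c^k(1+\Gamma(k\theta+1))$; but even granting that reading, your route replaces the $\Gamma(k\theta+1)$ growth in $k$ — governed by $\theta=\alpha/d$ — by $C^{k\alpha}d^{k\alpha/2}$, which is not the advertised dependence.) To prove the proposition as stated you should replace the tiling by a point-by-point greedy construction whose inter-point gaps carry the Weibull$(d)$ law, and then run the moment/Markov optimization on $\sum_iX_i^\alpha$.
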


\begin{proof}
    The proof is provided in the Appendix \ref{proofs}. 
\end{proof}

\section{Experiments}
\label{sec:experiment}

We consider here a series of clustering experiments involving the Fermat distance for different choices of $\alpha$. 
We are going to evaluate the trade-off between small values of $\alpha$ for which clustering is not feasible (Section \ref{sec:min-alpha})and large values of $\alpha$ where finite sampling effects distort the distance (Section \ref{sec:variability}). 

\subsection{Fermat \texorpdfstring{$K-$}{K}medoids}

We use the $K-$medoids algorithm for clustering \cite{kaufman1990partitioning, hastie}, a robust version of $K-$means in which the centroids of each cluster are forced to be a point in the set $Q_n$. 
An advantage of $K-$medoids is that different notions of distances can be used, leading to different partitions. 
The algorithm consists of alternating updates in the estimated cluster centroids $\hat c_i$ and cluster $\hat C_i \subset Q_n$ until the cluster assignment does not change. 
These updates are sequentially given by 
\begin{align}
    \hat c_i &\leftarrow \argmin_{c \in C_i} \sum_{x \in \hat C_i} D_{Q_n, \alpha}(x, c) ,\\
    \hat C_i &\leftarrow \left \{ x \in Q_n : D_{Q_n, \alpha}(\hat c_i, x) \leq D_{Q_n, \alpha}(\hat c_j, x) \, \forall j \neq i \right \} .
\end{align}
Note that different values of $\alpha$ give different partitions. If $\alpha = 1$, the sample Fermat distance boils down to the Euclidean distance, and both Fermat $K-$medoids and classical $K-$medoids coincide.
For $\alpha > 1$, clusters are not necessarily convex.




\subsection{Synthetic data}

We evaluate the performance of the Fermat $K$-medoids clustering algorithm when analysing the Swiss-roll generated dataset. 
We consider four clusters sampled from a Gaussian distribution in two dimensions with different means and then mapped to three dimensions using the map $(x, y) \mapsto (x \cos(\omega x), a y, x \sin(\omega x))$ for $a = 3$, $\omega = 15$.
We consider a total of $n=1000$ random samples, equally split between four different clusters.
This generates the dataset illustrated in Figure \ref{fig:swiss-roll}.
When evaluating the performance of Fermat $K$-medoids using adjusted mutual information, adjusted rand index, accuracy, and F1 score \cite{meilua2007comparing, JMLR:v11:vinh10a, Powers2011EvaluationFP}, we observe that smaller values of $\alpha$ result in poor performances for all performance metrics.
As we increase the value of $\alpha$, the performance increases until the performance decays again for large values of $\alpha$, having an optimal value at some middle range of values for $\alpha$.
We also observe large variances in performance for large values of $\alpha$, resulting in decaying median performances over new samplings. 
We compare the performance using $K$-medoids but with the distance obtained using Isomap \cite{tenenbaum2000global} and C-Isomap \cite{silva2002global}.  

\begin{figure}
    \centering
    \includegraphics[width=0.95\textwidth]{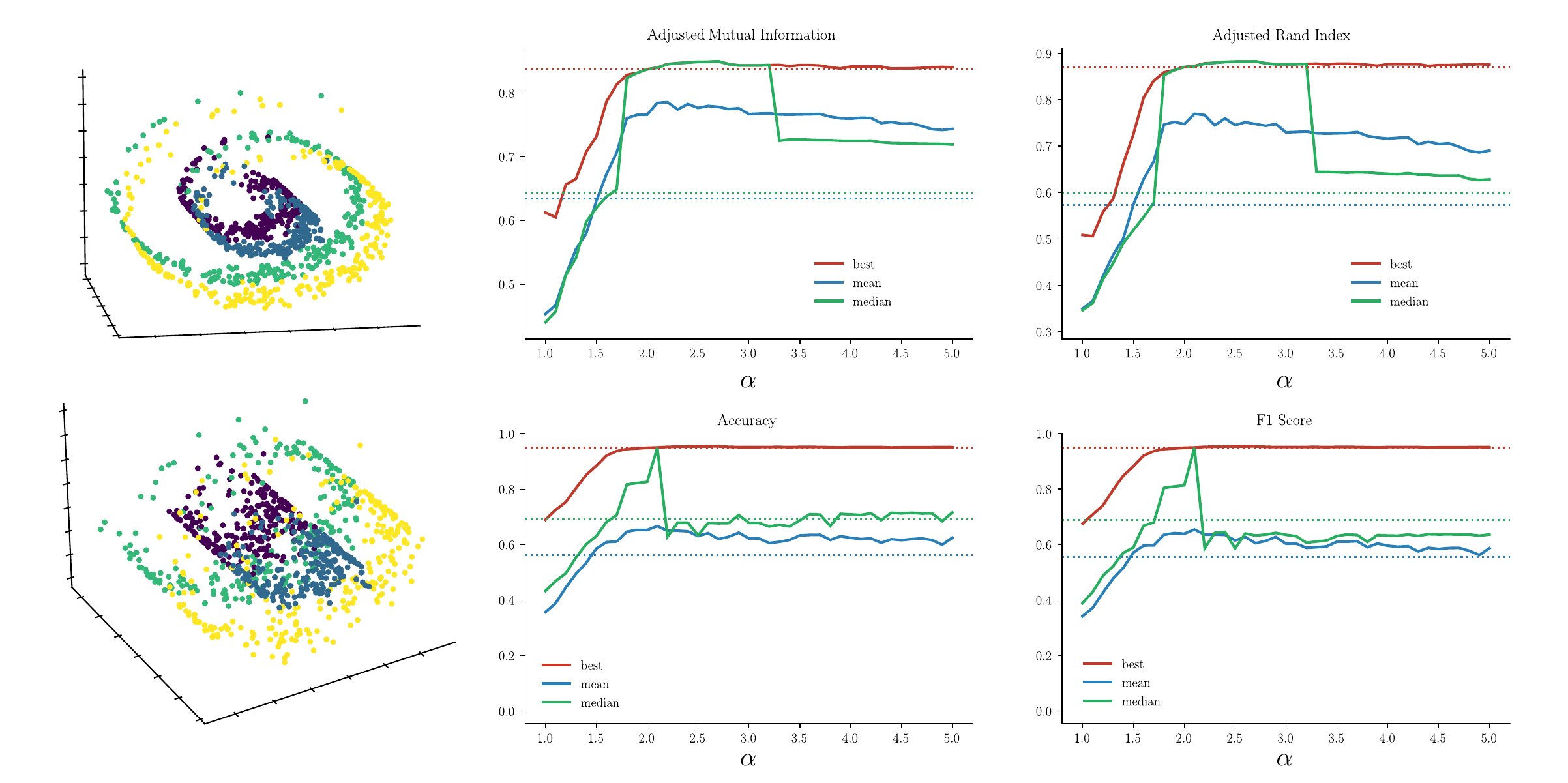}
    \caption{Different clustering performance indices for the Swiss Roll dataset. Solid lines correspond to the mean, median, and best performance archived by Fermat $K$-medoids over different samplings of the dataset. Dashed lines correspond to the performance obtained by the best output between Isomap and C-Isomap. This experiment has been presented previously in ICLR 2018 Workshop Track \cite{SGJ}.}
    \label{fig:swiss-roll}
\end{figure}

These experiments show that there is an optimal range of values of $\alpha$ that results in better clustering performance. 
This illustrates the point made in previous sections, showing that both small and large values of $\alpha$ result in poor performances.

\subsection{Clustering MNIST}
We now consider an example that inhabits a higher-dimensional and inherently more realistic realm: the digits 3 and 8 extracted from the MNIST dataset.
Our approach begins by subjecting the data to preprocessing through PCA, resulting in a reduced-dimensional representation of dimension 40. We then cluster this representation using $K-$medoids with the Fermat distance. We compute the mean adjusted mutual information (AMI) and compare it to the performances of $K-$means with Euclidean distance and a robust EM procedure \cite{roizman2019flexible, RobustClust} recognized for its consistently good performance as a fully unsupervised method. 

The findings are consolidated in Figure \ref{fig:MNIST}. In this instance, the clear presence of an advantageous parameter range becomes apparent. Notably, the remarkable advantage gained from operating within this window, compared to employing the Euclidean distance, is striking. Furthermore, it is remarkable that even a relatively straightforward algorithm like $K-$medoids can yield performance akin to significantly more computationally intricate methods when paired with the appropriate distance parameter.
\begin{figure}
     \centering
\includegraphics[scale=0.7]{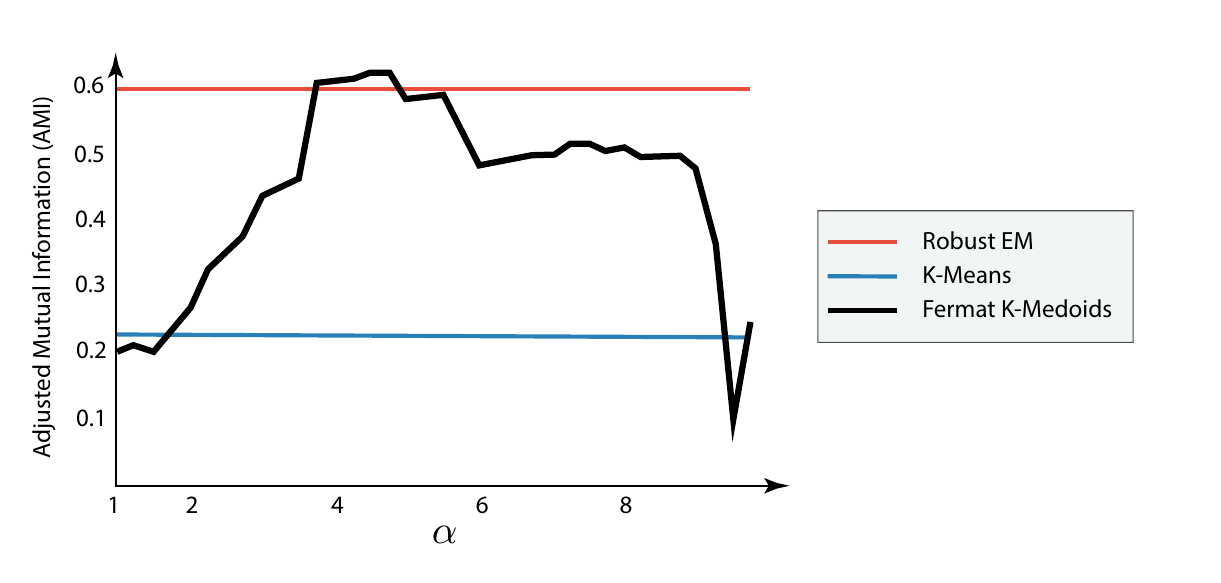}\\
\caption{ Performance of Fermat $K-$medoids algorithm compared to standard $K-$means and Robust expectation-maximization. Simulations credit to Alfredo Umfurer.
}
\label{fig:MNIST}
\end{figure}

\subsection{Coefficient of variation.}
We conclude this section by conducting a numerical exploration of the influence of noise on the coefficient of variation of the Fermat distance. 
We plot in Figure \ref{fig:CV} the coefficient of variation defined in \ref{CV} in the case of a uniform distribution on $[0,1]^d$ as a function of $\alpha/d$. As conjectured, we observe an exponential behaviour in the parameter $\alpha/d$, demonstrated in dimension one and conjectured in higher dimensions.

\begin{figure}
    \centering
    \includegraphics[width=0.95\textwidth]{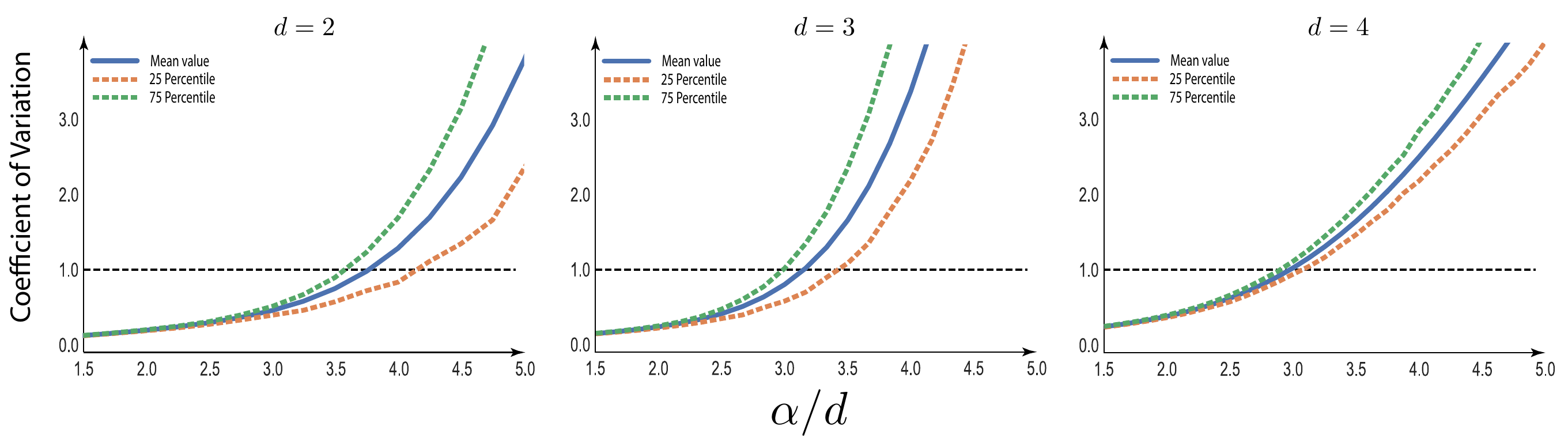}
    \caption{Coefficient of variation for the uniform distributions as a function of $\alpha / d$ for different ambient dimensions $d=2, 3, 4$.}
    \label{fig:CV}
\end{figure}

\bibliographystyle{plain}
\bibliography{biblio}

\begin{appendix}

\section{Proofs}
\label{proofs}


\begin{proof}[Proof of Proposition \ref{prop_al0}]
We prove that there exists $\alpha_0 \geq 1$ such that for all $\alpha \geq \alpha_0$ there is $\ep >0$ such that for $1 \leq i \leq m$ and $x, y \in C_i$, 
$$
D_\al^d(x,y) \leq D_\al^d(x,y') - \ep, \quad \text{ for all }y' \in C_j, \quad j \neq i.
$$
First, we bound from below the Fermat distance between two points in two different sets. By assumption, $\text{rch}(C) = \tau >0$. Since $\tau \leq \underset{i \neq j}{\min}\,\text{dist}(C_i,C_j)/2$, for all $x \notin C$, $f(x) \leq a_0$ and any path from $C_i$ to $C_j$ has measure at least $2\tau$ in the complement of $C$ when $i \neq j$, we have for $x \in C_i$, $y' \in C_j$, 
\begin{equation}\label{proof:al0_appendix_lowbound}
D_\al^d(x,y') \geq a_0^{(1-\al)/d} \, 2 \tau.
\end{equation}
Second, we bound from above the Fermat distance between two points $x$ and $y$ in the same cluster $C_i$. Since $C$ is compact, we can consider a finite covering by $N_r$ $D$-dimensional balls, $\B_D$, with radius $r < \tau$, and centers $\mathcal{V} = \left\{v_1, \dots, v_{N_r} \right\} \subset C$, \textit{i.e.},
$$
C \subset \bigcup_{l=1}^{N_r} \B_D(v_l,r) \subset C^r.
$$
We are going to build a path from $x$ to $y$ inside $C_i^r \subset C^r$ and project it onto $C$. By construction, we know that the path will be projected onto $C_i$. Let $\mathcal{V}_i = \left\{ \mathcal{V} \cap C_i \right\}$. Observe that $(\B_D(v_l,r)\colon  v_l \in \mathcal V_i)$ is a covering of $C_i$ by $\text{card}(\mathcal{V}_i) = N_r(C_i) < N_r$ balls. Consider the weighted graph $\G_i$ with vertex set $\mathcal{V}_i$ in which two vertices are connected if and only if their respective balls intersect. The weight of an edge is given by the Euclidean distance between the two points. By construction, this graph is connected, and the weight of each edge is at most $2 r$. We connect $x$ to the center $v(x) \in \mathcal{V}_i$ of a ball containing it. We have $|x - v(x)| \leq r$. Similarly, we connect $y$ to a $v(y) \in \mathcal{V}_i$ such that $|y - v(y)| \leq r$.

By definition, the shortest path from $x$ to $y$ inside $\G_i$, denoted by $\gamma$, cannot go through the same vertex more than once. Then the Euclidean length of $\gamma$, denoted by $|\gamma|$ verifies
$$
|\gamma| \leq N_r(C_i)\,2r \leq N_r \,2r. 
$$
As $\gamma \subset C^r$ and $r < \tau$, the projection of $\gamma$ onto $C$, $\pi_C(\gamma) = \gamma_i$, is well defined. Moreover $\pi_C(\gamma) \subset C_i$. The projection map is Lipschitz, with Lipschitz constant $\dfrac{\tau}{\tau - r}$ (\cite[Theorem 4.8]{federer1959curvature}). Then we have,
$$
|\gamma_i| = |\pi_C\left( \overline{\gamma} \right)| \leq \frac{\tau}{\tau - r}\,N_r\,2r. 
$$
The path $\gamma_i$, from $x$ to $y$, is contained in $C_i \subset C$ and for all $x \in C_i$, $f(x) \geq a_1$. Then,
\begin{equation}\label{proof:al0_appendix_upbound}
D_\al^d(x,y) \leq \int_{\gamma_i} \frac{1}{f^{(\alpha-1)/d}} \le  a_1^{(1-\al)/d} \frac{\tau}{\tau - r}\,N_r\,2r.
\end{equation}
Next, we calibrate $\al$ to ensure the clustering feasibility, \textit{i.e.,} one get,
\begin{align}\label{proof:al0_appendix_cond}
a_0^{(1-\al)/d} \, 2 \tau &> a_1^{(1-\al)/d} \frac{\tau}{\tau - r}\,N_r\,2r,
\end{align}
which is equivalent to,
\begin{align*}
\al &> 1 + d\,\dfrac{\log \left( \dfrac{r}{\tau - r}N_r \right)}{\log(a_1/a_0)}.
\end{align*}

Now, suppose that $N_r$ is optimal, \textit{i.e.}, it is the smallest number of balls $\B_D(v_l, r)$ that covers $C$. As $C$ is compact, there exists a positive constant $c$ (depending on the dimension) such that 
$N_r\leq c r^{-d} \vee 1$, where $d$ is the intrinsic dimension of $C$. Choose $r=\frac{\tau}{2}$ to get, 

\begin{equation}
    \al > 1 + d\,\dfrac{\log \left( c (\frac{\tau}{2})^{-d} \vee 1 \right)}{\log(a_1/a_0)}. 
\end{equation}
The first part of the proof finishes by taking $\beta_0= (c (\frac{\tau}{2})^{-d} \vee 1)$.

If $C$ is smooth enough so that the length of geodesics in $C$ is uniformly bounded by $R$, we can take instead $\gamma_i$ to be the geodesic from $x$ to $y$ to obtain the bound
\[
D_\alpha^d(x,y) \le \int_{\gamma_i} \frac{1}{f^{(\alpha-1)/d}} \le a_1^{(1-\alpha)/d} R,
\]
instead of \eqref{proof:al0_appendix_upbound}. The rest of the proof follows verbatim to obtain
\begin{equation}
    \al > 1 + d\,\dfrac{\log(R/2\tau)}{\log(a_1/a_0)}. 
\end{equation}
\end{proof}
\begin{proof}[Proof of Proposition \ref{prop_al0_disc}]
The proof follows the same lines as the one of Proposition \ref{prop_al0}, but we need to be more careful. Recall that $C$ is compact, $f \geq a_1$ in $C$ and $a_0$ is such that there exists a region around each cluster that strictly separates the level-sets $f^{-1}(a_1)$ and $f^{-1}(a_0)$ with the projection being well defined. More precisely, we have that for 
$$
\eta = \inf \left\{|s-t|\,:\,s \in f^{-1}([0;a_0]),\,t \in f^{-1}([a_1;\infty])\right\} > 0,
$$
the inequality $\tau = \text{rch}(C) > \eta$ is verified.

We first prove that $\text{rch}(C^\eta)=\text{rch}(C)-\eta = \tau - \eta$. To do that, observe that 
for any $\delta>0$, $(C^\eta)^\delta = C^{\eta+\delta}$. In fact, if $x \in (C^\eta)^\delta$, as $C$ is compact, we know that there exists at least one projection of $x$ onto $C$ denoted by $\pi_C(x)$. Applying the customs passage theorem, we have $\left\{ [x,\pi_C(x)] \cap \partial (C^\eta)\right\} \neq \emptyset.$ Call $z$ the point lying at this intersection. We have, 
\begin{align*}
    d_C(x) &= \underset{p \in C}{\inf}  |x-p| \leq |x-z|+ \underset{p \in C}{\inf}|z-p| \leq \delta + \eta.
\end{align*}
Then $x \in C^{\eta+\delta}$. On the other hand, for $x \in C^{\eta+\delta}$ either $d_C(x)\leq \eta$ and  $x \in (C^{\eta})^\delta$, or $d_C(x) > \eta$. Suppose $d_{C^\eta}(x) > \delta$. As we assumed that $d_{C^\eta}(x) > \delta$, $|x-z| > \delta$ and
\begin{align*}
    |x-\pi_C(x)| &= |x-z| + |x-\pi_C(x)| > \delta + \eta.
\end{align*}
A contradiction. Hence $d_{C^\eta}(x) \leq \delta$ and $x \in C^{\eta+\delta} = (C^\eta)^\delta$. This proves our claim 
$$
\text{rch}(C^\eta)=\text{rch}(C)-\eta = \tau - \eta.
$$

The bound from below for the macroscopic Fermat distance between two points lying in two different clusters is the same as in \ref{proof:al0_appendix_lowbound}, and we omit it here.

The upper bound for the macroscopic Fermat distance between two points lying in the same cluster is also very similar. The only difference is that we need to build an $r$-covering of $C^\eta$ instead of $C$, with $r < \text{rch}(C^\eta)=\tau-\eta$. We define the shortest path (in the neighborhood graph restricted to $C_i^\eta$) inside $(C_i^\eta)^r = C_i^{\eta+r}$, $1 \leq i \leq m$, in the same manner. The length of this path is smaller than $2(\tau-\eta) N_r$, where $N_r$ is the covering number of $C^\eta$.

We first project this path on $C^\eta$, which will be projected on $C_i^\eta$ by construction. We project this new path on $C$, which is still by construction projected on $C_i$. Applying two times Federer's theorem (\cite{federer1959curvature}, Theorem 4.8-(8)), the condition \ref{proof:al0_appendix_cond} on $\alpha$ becomes  

\begin{align*}
a_0^{(1-\al)/d} \, 2 \tau &> a_1^{(1-\al)/d} \left(\frac{\tau-\eta}{\tau-\eta - r}\right) \left(\frac{\tau}{\tau-\eta}\right)\,N_r\,2r,
\end{align*}
which is equivalent to
\begin{align*}
\alpha &> 1 + d \dfrac{\log\left(\frac{r}{\tau-(r+\eta)}N_r\right)}{\log(a_1/a_0)}.
\end{align*}
\end{proof}


Next, we prove Proposition \ref{mictomac}. We will use the following lemma, whose proof is elementary and is omitted. 

\begin{lem}
\label{lemma1}
Assume $\mathcal M$ is compact and that for all $x \in \mathcal M$,
$|f(x)|>\iota >0$. Then for every $\epsilon >0$ there is $\delta >0$ such that for every $u,v \in \mathcal M$ with $ |u-v|\le \delta$ we have  $|D^d_\alpha(u,x)-D^d_\alpha(v,x)|\le \epsilon$ for every $x \in \mathcal M$.
\end{lem}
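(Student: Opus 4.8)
The statement to prove is Lemma~\ref{lemma1}: under compactness of $\mathcal M$ and a uniform positive lower bound $\iota$ on $f$, the map $u \mapsto D^d_\alpha(u,x)$ is uniformly equicontinuous in $u$, uniformly over $x \in \mathcal M$. The natural approach is to exhibit a quantitative Lipschitz-type bound: I will show that $|D^d_\alpha(u,x) - D^d_\alpha(v,x)|$ is controlled by the Fermat distance $D^d_\alpha(u,v)$ between $u$ and $v$ themselves (a triangle inequality), and then bound $D^d_\alpha(u,v)$ from above by something that tends to $0$ as $|u-v|\to 0$, using the lower bound on $f$.

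\textbf{Step 1: Triangle inequality for $D^d_\alpha$.} Since $D^d_\alpha(\cdot,\cdot)$ is defined as an infimum of path integrals of the nonnegative weight $f^{-(\alpha-1)/d}$ over rectifiable paths on $\mathcal M$, concatenation of paths shows it satisfies the triangle inequality $D^d_\alpha(u,x) \le D^d_\alpha(u,v) + D^d_\alpha(v,x)$, and symmetrically with $u$ and $v$ swapped. Hence $|D^d_\alpha(u,x) - D^d_\alpha(v,x)| \le D^d_\alpha(u,v)$ for every $x\in\mathcal M$. This already reduces the problem to a statement that does not involve $x$ at all.

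\textbf{Step 2: Local upper bound on $D^d_\alpha(u,v)$.} Because $\mathcal M$ is a smooth compact Riemannian manifold embedded in $\R^D$, there is a radius $\rho_0>0$ and a constant $\kappa \ge 1$ such that any two points $u,v\in\mathcal M$ with $|u-v|\le\rho_0$ are joined by a rectifiable path $\gamma$ in $\mathcal M$ whose length (in the induced metric) is at most $\kappa|u-v|$ — e.g. the minimizing geodesic, using that the extrinsic and intrinsic metrics are locally comparable on a compact embedded manifold. Along $\gamma$ the weight satisfies $f^{-(\alpha-1)/d} \le \iota^{-(\alpha-1)/d}$ since $f \ge \iota$ everywhere (note $\alpha \ge 1$ so the exponent is nonnegative; if $\alpha=1$ the weight is $1$ and the bound is trivial anyway). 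Therefore
\begin{equation*}
D^d_\alpha(u,v) \le \int_\gamma \frac{1}{f^{(\alpha-1)/d}} \le \iota^{-(\alpha-1)/d}\,\kappa\,|u-v|.
\end{equation*}

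\textbf{Step 3: Conclusion.} Given $\epsilon>0$, choose $\delta = \min\bigl(\rho_0,\ \epsilon\,\iota^{(\alpha-1)/d}/\kappa\bigr) > 0$. Then for $|u-v|\le\delta$ we get $D^d_\alpha(u,v)\le\epsilon$, and combining with Step~1 yields $|D^d_\alpha(u,x)-D^d_\alpha(v,x)|\le\epsilon$ for all $x\in\mathcal M$, as required. The only genuinely non-trivial input is the local comparability of the extrinsic Euclidean distance and intrinsic path-length on the compact embedded manifold in Step~2, which is standard (it follows from positivity of the reach of $\mathcal M$, or from a compactness/Taylor-expansion argument on a tubular neighborhood); everything else is the triangle inequality and the uniform weight bound. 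Since the paper labels this proof "elementary and omitted," this is exactly the level of detail intended.
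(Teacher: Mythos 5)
Your argument is correct, and since the paper explicitly omits the proof of this lemma as ``elementary,'' there is no authorial proof to compare against; your route --- the triangle inequality for $D^d_\alpha$ via path concatenation, followed by the local comparability of intrinsic and extrinsic distances on a compact embedded manifold and the uniform weight bound $f^{-(\alpha-1)/d}\le \iota^{-(\alpha-1)/d}$ --- is precisely the standard argument the authors are alluding to. The only point worth making explicit is that $\rho_0$ must also be taken smaller than the minimal Euclidean separation between connected components of $\mathcal M$ (so that $u$ and $v$ are guaranteed to be joined by a path in $\mathcal M$ at all), which your choice of $\rho_0$ handles implicitly but could state.
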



\begin{proof}[Proof of Proposition \ref{mictomac}]
We first prove that 
the definition of $\alpha$-clusters implies
with overwhelming probability for $n$ large enough.
Using the definition of $\alpha_0$, for $\alpha > \alpha_0>1$ there exists $\epsilon>0$ such that,
\begin{equation}
     D^d_\alpha(x,y) \le  D^d_\alpha(x,y')-\epsilon,  \text{ for all } x,y \in C_i, \, y' \in C_j, \, j\neq i. 
    \label{eq:four}
\end{equation}

Consider the events,
\begin{align*}
A_{n,\alpha} &= \left \{ \Big| {n^{(\al-1)/d}}D_{Q_n,\alpha}(x, y) - \mu D^d_\alpha(x, y) \Big| \le \epsilon/3, \text{ for all } x \in Q_n \right \}.
\end{align*}

By Proposition 2.6 in\cite{borghini2020intrinsic}, there is $\gamma>0$ and $c>0$ such that for $n$ large enough
$$
P(A_{n,\alpha}^c) \le e^{- c n^\gamma} \quad \forall 1 \leq i \leq m.
$$
Now, on $A_{n,\alpha}$, we have for $x,y \in C_i$ and $y'\in C_j$, $j \neq i$,
\begin{align*}
     {n^{(\al-1)/d}}\,D_{Q_n,\alpha}(x, y) & \le \mu D^d_\alpha(x, y) + \epsilon/3\\
      &\le \mu D^d_\alpha(x, y')   +  \epsilon/3 - \epsilon \hspace{1 cm}  \text{(by the clustering condition \eqref{eq:first})}\\
      &\le {n^{(\al-1)/d}}\,D_{Q_n,\alpha}(x, y') +2\epsilon/3 - \epsilon.
\end{align*}

Since $2\epsilon/3 -\epsilon = - \epsilon/3 < 0$, we have
$${n^{(\alpha-1)/d}}D_{Q_n,\alpha}(x, y)\le {n^{(\alpha-1)/d}}D_{Q_n,\alpha}(x, y')-\epsilon/3,
$$
on $A_{n,\alpha}$.
Hence
\[
P(F(\alpha,n)^c) \le \P(A_{n,\alpha}^c) \le  e^{-c n^\gamma},
\]
for $n$ large enough.

Next, take $\alpha < \alpha_0$ such that $(\mathcal C, f)$ is not $\alpha-$feasible and suppose that the microscopic clustering conditions \eqref{eq:cluster_samplemicro} are satisfied, \textit{i.e.}, there exists $\epsilon>0$ such that for every $i \ne j$, $x,y \in C_i \cap Q_n$ and $y' \in C_j\cap Q_n$ we have
\begin{align}
{n^{(\al-1)/d}} D_{Q_n,\alpha}(x,y) 
\leq  {n^{(\al-1)/d}}\ D_{Q_n,\alpha}(x,y') - \epsilon .
\label{eq:seven}
\end{align}
So, on $A_{n,\alpha}$ we have,  
\begin{align*} \mu D^d_\alpha(x,y) &\leq {n^{(\al-1)/d}}\, D_{Q_n,\alpha}(x,y) + \epsilon/3 \\
&\leq {n^{(\al-1)/d}}\, D_{Q_n,\alpha}(x,y')  + \epsilon/3- \epsilon\hspace{1 cm} \text{(by \eqref{eq:seven})}\\
& \leq  \mu  D^d_\alpha(x,y') + 2\epsilon/3- \epsilon.
\end{align*}
We have, 
$$
D^d_\alpha(x, y)\le D^d_\alpha(x, y') - \epsilon/3\mu,
$$
which can not hold since $(\mathcal C,f)$ is not $\alpha-$feasible. This means that $A_n \subset F(n,\alpha)^c$ and the conclusion of the proposition follows.
\end{proof}

\begin{proof}[Proof of proposition \ref{moments:D^d=1}]
The study of the statistic $D_{Q_n, \alpha}^{d=1}$ boils down to the study of the uniform spacings given by  $\Delta_i = q^{(i+1)} - q^{(i)}$ for $i=1, 2, \ldots, n-1$, $\Delta_0 = q^{(1)}$ and $\Delta_n = 1 - q^{(n)}$. 
Using that the joint density $p$ of the order statistics $q^{(1)}, q^{(2)}, \ldots, q^{(n)}$ is given by 
\begin{equation*}
    p(t_1, t_2, \ldots, t_n)
    = 
    n ! \,
    \mathbbm{1}_{ \left \{ 0 \leq t_1 \leq \ldots \leq t_n\} \leq 1 \right \} }
\end{equation*}
it is easy to derive the joint distribution of the vector $(\Delta_0,\dots,\Delta_{n-1})$ which is uniformly distributed on the $n$ dimensional simplex denoted by
\begin{equation*}
\cS_n = \left\{ (s_0,s_1,\dots,s_{n-1}) \in \mathbb{R}^n,\; s_0,s_1,\dots,s_{n-1} > 0, \; \sum_{i=0}^{n-1}s_i \leq 1  \right\}.
\end{equation*}
That is $(\Delta_0,\dots,\Delta_{n})\sim \text{Dir}(a)$, where $\text{Dir}(a)$ denotes the flat Dirichlet distribution with parameter $a \in \mathbb{R}^{n+1}$, $a = (1,\dots, 1)$ and $\Delta_n = 1 - \sum_{i=0}^{n-1} \Delta_i$.
The moments of the Dirichlet distribution can be easily found as
\begin{align*}
    \E [\Delta_i^\alpha]
    &= 
    \E_{(\Delta_0, \ldots, \Delta_n) \sim \text{Dir}((1, 1, \ldots, 1))} [\Delta_0^\alpha]
    = 
    \frac{\Gamma(n+1)}{\Gamma(n+\alpha+1)} \Gamma(\alpha+1),
    \\
    \E [\Delta_i^\alpha \Delta_j^\alpha ]
    &= 
    \E_{(\Delta_0, \ldots, \Delta_{n}) \sim \text{Dir}((1, 1, \ldots, 1))} [\Delta_0^\alpha \Delta_1^\alpha]
    = 
    \frac{\Gamma(n+1)}{\Gamma(n+2\alpha+1)} \Gamma(\alpha+1)^2.
\end{align*}
The relative asymptotic behaviour of Gamma functions given by \cite{gamma-approx}
\begin{equation}
    \frac{\Gamma(x+a)}{\Gamma(x+b)}
    = 
    x^{a-b}
    \left(
    1 + \frac{(a-b)(a+b-1)}{2x} + \mathcal O \left( \frac{1}{x^2}\right)
    \right).
    \label{eq:gamma-approx}
\end{equation}
implies that $\Gamma(n+\alpha+1) / (n+1)^\alpha \Gamma(n+1) \rightarrow 1$ as $n \rightarrow \infty$.
Then, 
\begin{equation*}
    \lim_{n \to \infty} n^\alpha \E[\Delta_i^\alpha]
    = \Gamma(\alpha+1), \qquad
    \lim_{n \to \infty} n^{2\alpha} \E [\Delta_i^\alpha \Delta_j^\alpha ]
    = \Gamma(\alpha+1)^2.
\end{equation*}
Finally,
\begin{equation*}
    n^{\alpha-1} \E [D_{Q_n, \alpha}^{d=1}] 
    =
    n^{\alpha-1} \E \left [ \sum_{i=0}^n \Delta_i ^\alpha \right]
    = 
    \E \left[ (n\Delta_1)^\alpha \right]
    \xrightarrow{n \to \infty}
    \Gamma(\alpha+1).
\end{equation*}
On the other hand, the second moment can be computed as 
\begin{align*}
    \E [(D_{Q_n, \alpha}^{d=1})^2] 
    &=
    \E \left[(\sum_{i=0}^n \Delta_i ^\alpha)^2 \right]
    =
    (n+1) \E \left[\Delta_1^{2\alpha}\right] + n(n+1) \E[\Delta_1^\alpha\Delta_2^\alpha] \nonumber \\
    &= 
    (n+1) \left( \E \left[\Delta_1^{2\alpha}\right] - \E [\Delta_1^\alpha]^2 \right)
    + 
    \E [D_{Q_n, \alpha}^{d=1}]^2 \nonumber \\
    &+ 
    n(n+1) \left( \E[\Delta_1^\alpha \Delta_2^\alpha] - \E[\Delta_1^\alpha]\E[\Delta_2^\alpha] \right).
\end{align*}
Rearranging the terms in the last expression, we obtain 
\begin{align*}
    n \Var [n^{\alpha-1} D_{Q_n, \alpha}^{d=1}]
    &= 
    \frac{n+1}{n} \left( \E[n^{2\alpha} \Delta_1^{2\alpha}] - \E[n^\alpha \Delta_1^\alpha]^2 \right)\\
    &+
    (n+1) \Cov (n^\alpha \Delta_1^\alpha, n^\alpha \Delta_2^2). 
\end{align*}
As $n \to \infty$, the first term on the right-hand side converges to $\Gamma(2\alpha+1)-\Gamma(\alpha+1)^2$.
Based again in Equation \eqref{eq:gamma-approx}, we have
\begin{align*}
    \E[\Delta_1^\alpha \Delta_2^\alpha] - \E[\Delta_1^\alpha]\E[\Delta_2^\alpha] 
    &= 
    \left( \frac{\Gamma(n+1)}{\Gamma(n+2\alpha+1)} - \frac{\Gamma(n+1)^2}{\Gamma(n+\alpha+1)^2} \right) \Gamma(\alpha+1)^2 \\
    &=
    \bigg( \frac{1}{n_+^{2\alpha}}
    \left( 1 - \frac{2\alpha(2\alpha-1)}{2n_+} + \mathcal O (n^{-2}) \right) \\
    &- 
    \, \frac{1}{n_+^{2\alpha}}
    \left( 1 - \frac{\alpha(\alpha-1)}{2n_+} + \mathcal O (n^{-2}) \right)^2
    \bigg) \Gamma(\alpha+1)^2 \\
    &=
    \left( - \frac{\alpha^2}{n_+} + \mathcal{O}(n^{-2}) \right) \frac{\Gamma(\alpha+1)^2}{n_+^{2\alpha}},
\end{align*}
where $n_+ = n+1$. This implies 
\begin{equation*}
    \lim_{n \to \infty}
    (n+1) \Cov (n^\alpha \Delta_1^\alpha, n^\alpha \Delta_2^2)
    = 
    - \alpha^2 \Gamma(\alpha+1)^2,
\end{equation*}
which finally gives 
\begin{equation*}
    \lim_{n \to \rightarrow \infty}
    n \Var [n^{\alpha-1} D_{Q_n, \alpha}^{d=1}]
    = 
    \Gamma(2\alpha+1) - (\alpha^2+1) \Gamma(\alpha+1)^2.
\end{equation*}
\end{proof}

\begin{proof}[Proof of Proposition \ref{prop_HD}]
Note that we can turn results for a Poisson process of intensity $n$ in the hypercube to a scaled Poisson process
$n^{1/d} {\cQ_n}$ which is essentially (up to exponentially small probability events) to a unit Poisson process ${\cP}$ in the hyperplane (see Theorem 2.4 in \cite{HN}).
Hence with $l=n^{1/d}$, we replace $ n^{(\alpha-1)/d} D^d_{\cQ_n,\alpha}(0,e_1)$ by $ l^{-1} D_{\cP,\alpha}^d(0,l e_1)$, and now work with a unit Poisson process.
The proof is based on constructing a specific (sub-optimal) path that has been considered previously in \cite{HN2}. Define the sets,
$$ \D_t(q)= \{ q + b \in \mathbb R^d, \
\le b_1,  \text{ for } 2 \le i \le d \}
 0 \le b_1 \le t, \  0 \le \sigma_i b_i
$$
where $\sigma_i= -1_{q_i >0} + 1_{q_i\ge 0}$. Let us now construct inductively the specific path of $l$ points $q_0,\ldots q_l$ such that for $k \le l$, 
\begin{align*}
    \tilde q_0 &= 0,\\
    X_k &= \inf \{ t >0, \text{ there is a point in } \D_t(\tilde q_{k-1}) \},\\
     \tilde q_k & = \text{particle present in } \D_{X_k}(\tilde q_{k-1}),
\end{align*}
Finally, $q_{l+1}= l e_1$. Observe that,
$$
|\tilde q_{k}-\tilde q_{k-1}|^\alpha \le {d}^\frac{\alpha}{2} X_k^\alpha \text{ and } |\tilde q_{l+1}-l e_1|^\alpha \le \sum_{i=1}^l {d}^\frac{\alpha}{2} X_i^\alpha. 
$$
Using basic properties of Poisson processes, we get that the random variables $(X_i)_{1\le i\le l}$ are i.i.d. Now,
$$
\P(X_i \ge x) = \P(\text{ no points in } \D_x(0)\, )= e^{- (1/d){x^{d}}}.
$$
Therefore,
$$
\P(X_i^\alpha \ge x) = e^{- (1/d) x^{d/\alpha}}=e^{- (1/d) x^{1/\theta}}.
$$
Hence,
$$
D^d_{\alpha,\cP}(0,le_1) \le \sum_{i=1}^l|\tilde q_{i}-\tilde q_{i-1}|^\alpha + |\tilde q_{l}-l e_1|^\alpha,
$$
which gives that
\begin{align*}
 \P( D^d_{\alpha,\cP}(0,le_1) \ge l x) & \le \P \left( 2\sum_{i=1}^N|\tilde q_{i}-\tilde q_{i-1}|^\alpha > l x \right),\\
 &\le  \P \left ( 2\sum_{i=1}^{l}|\tilde q_{i}-\tilde q_{i-1}|^\alpha > l x \right)\\ 
 &\le  \P \left ( \sum_{i=1}^{l} X_i^\alpha > {l x / 2 d^{\alpha/2}} \right) . 
\end{align*}
Note that if $\alpha > d$, we cannot use the usual large deviation bounds as the variable $X_i$ does not have an exponential moment. We can, however, use their usual moments. Our computations are similar to the ones in \cite{Weibull}. 

Observe that $\P(Y \ge x)=e^{-x^{\frac1\theta}/d}$ implies $\E(Y^\gamma)= d^{\gamma\theta} \Gamma(\gamma\theta +1).$ 

Let $Y_i= X_i^\alpha$ and take $\gamma \ge 1$. Then,
\begin{align}
   \P\left(\sum_{i=1}^{l} Y_i \ge t\right) &\le E\Big[\Big( \sum_{i=1}^{l} Y_i \Big)^{\gamma}\Big] t^{-\gamma},\\
 & \le \left(\sum_{i=1}^{l} \E(Y_i^\gamma)^{1/\gamma} \right)^\gamma  t^{-\gamma},\\
 & =  l^\gamma d^{\gamma\theta} \Gamma(\gamma\theta +1)  t^{-\gamma},\\
  & \le  l^\gamma d^{\gamma\theta} (\gamma \theta )^{\gamma \theta}  t^{-\gamma}.
  \end{align}
We use the bound $\Gamma(x+1) \le x^{x}$ for $x\ge 1$ in the last line. Taking $t$ such that  $l^\gamma d^{\theta \gamma} (\gamma \theta )^{\gamma \theta}  t^{-\gamma}= e^{-\gamma}$, we can state that for all $\gamma \ge 1$
\begin{align*}
   \P\left(\sum_{i=1}^{l} Y_i \ge e  d^{\theta} (\gamma \theta)^{\theta} l  \right)
 & \le   e^{-\gamma},
\end{align*}
and $\gamma^\theta c=x$ with $c = 2e d^{\theta +\alpha/2} \theta^{\theta}$,
 leads to
\begin{align*}
   \P\left({\frac1l }\sum_{i=1}^{l} Y_i   \ge {\frac{x}{2 d^{\alpha/2}}} \right)
 & \le   e^{-(x/c)^{1/\theta}},
\end{align*}
for all $x>c$. Hence
 \begin{align*}
  \E\left(\frac 1 l \sum_{i=1}^l Y_i \right)^k &\le
c +   \int_{c}^{\infty} e^{-(x/c)^{1/k\theta}} dx,\\
  &\le c +  c \Gamma(k\theta+1) .
 \end{align*}
We obtain,
\[
\E\left(\big(l^{-1} D^d_{\alpha,\cP}(0, le_1)\big)^k \right) \le c +  c \Gamma(k\theta+1) .
\]
\end{proof}
\end{appendix}
\end{document}